\newcommand{\yao}[1]{{\color{orange}{\bf\sf [yao: #1]}}}
\newcommand{\junz}[1]{{\color{red}{[zj: #1]}}}
\newtheorem{theorem}{Theorem}
\newtheorem{definition}{Definition} 
\newenvironment{proof}{{\noindent\it Proof.}}{\hfill $\square$\par}
\newcommand*{\dif}{\mathop{}\!\mathrm{d}}
\newcommand*{\eref}[1]{(\ref{#1})}
\def\eqref#1{equation~\ref{#1}}
\def\1{\bm{1}}
\DeclareMathAlphabet{\mathsfit}{\encodingdefault}{\sfdefault}{m}{sl}
\SetMathAlphabet{\mathsfit}{bold}{\encodingdefault}{\sfdefault}{bx}{n}
\title{Model-based Reinforcement Learning with a Hamiltonian Canonical ODE Network}
\author{Yao Feng, Yuhong Jiang, Hang Su, Dong Yan, Jun Zhu\\
Tsinghua University\\
}
\begin{document}

\maketitle

\begin{abstract}
Model-based reinforcement learning usually suffers from a high sample complexity in training the world model, especially for the environments with complex dynamics. To make the training for general physical environments more efficient, we introduce Hamiltonian canonical ordinary differential equations into the learning process, which inspires a novel model of neural ordinary differential auto-encoder (NODA). NODA can model the physical world by nature and is flexible to impose Hamiltonian mechanics (e.g., the dimension of the physical equations) which can further accelerate training of the environment models. It can consequentially empower an RL agent with the robust
extrapolation using a small amount of samples as well as the guarantee on the physical plausibility. Theoretically, we prove that NODA has uniform bounds for multi-step transition errors and value errors under certain conditions. Extensive experiments show that NODA can learn the environment dynamics effectively with a high sample efficiency, making it possible to facilitate reinforcement learning agents at the early stage.   
\end{abstract}

\section{Introduction}

Reinforcement learning has obtained substantial progress in both theoretical foundations~\citep{asadi2018lipschitz, jiang2018pac} and empirical applications~\citep{mnih2013playing, mnih2015human,peters2006policy, johannink2019residual}. In particular, model-free reinforcement learning (MFRL) can complete complex tasks such as Atari games~\citep{schrittwieser2020mastering} and robot control~\citep{roveda2020model}. 
However, the MFRL algorithms 
often need a large amount of interactions with the environment~\citep{langlois2019benchmarking} in order to train an agent, which impedes their further applications. 
Model-based reinforcement learning (MBRL) methods can alleviate this issue by resorting to a model to characterize the environmental dynamics and conduct planning~\citep{van2019use,moerland2020framework}. 

In general, MBRL can quench the thirst of massive amounts of real data that may be costly to acquire, by using rollouts from the model~\citep{langlois2019benchmarking, deisenroth2011pilco}. It has witnessed numerous works on approximating the model with various strategies, such as
least-squares temporal difference \citep{boyan1999least}, guided policy search (GPS) \citep{levine2014learning}, dynamic Bayesian networks (DBN)~\citep{hester2012learning}, 
and deep neural networks~\citep{fujimoto2018addressing}. 
However, the sample efficiency of MBRL can still be limited due to the high sample complexity of learning a world model when the environment is complex. Traditional methods such as the Gaussian Processes based method~\citep{deisenroth2011pilco} can perform well on some problems with high sample efficiency, but they are not easy to scale to high-dimensional
problems~\citep{plaat2020model}. 
High-capacity models scale well, but they often have low sample efficiency~\citep{plaat2020model}. The trade-off between scalability and sample complexity remains as a critical issue for model-based RL.

To address the aforementioned issue, we propose to introduce physical knowledge to reduce the sample complexity for learning high-dimensional dynamics in physical environments. We focus on reinforcement learning in an environment whose dynamics can be formulated by Hamiltonian canonical equations~\citep{goldstein2002classical}. Up till now, Hamiltonian dynamics have been successfully applied in numerous areas of physics from robotics to industrial automation. 
Specifically, we formulate the environments dynamics as ordinary differential equations (ODEs), and then use a novel network architecture called Neural Ordinary Differential Auto-encoder (NODA) as our world model, which is naturally induced by physical equations.

In particular, NODA consists of two parts --- an auto-encoder and an ODE network. We use the auto-encoder to get the underlying physical variables, and use the ODE network to learn the dynamics over physical variables. By using NODA, we can enjoy its ability of modeling the physical world as well as its flexibility of combining physical knowledge (if any), such as the dimension of physical variables. Theoretically, we provide uniform bounds for both multi-step transition errors and value errors for NODA by extending the former study of Lipschitz models \citep{asadi2018lipschitz} to continuous action spaces. It is noted that NODA can be combined with both MFRL methods like SAC~\citep{haarnoja2018soft} and MBRL methods like Dreamer~\citep{hafner2019dream} by facilitating the learning of the world model.  
Extensive experiments show that we can learn NODA well using a small number of data with an appropriate structure encoded, which can boost the sample efficiency by using imaginary trajectories over the environment models~\citep{todorov2012mujoco, schulman2015high}.

\section{Background}

We start by presenting the background knowledge of reinforcement learning, and then explain the relationship between MBRL and Hamiltonian mechanics. 

\subsection{Model-based Reinforcement Learning}

We consider the Markov decision process (MDP) model for reinforcement learning. Formally, an MDP is formulated as a tuple $\left\langle\mathcal{S}, \mathcal{A}, T, R, \gamma\right\rangle$, where $\mathcal{S}$ is the state space, $\mathcal{A}$ is the action space, $T: \mathcal{S}\times\mathcal{A}\rightarrow \mathbb{P}(\mathcal{S})$ is the transition function, $R: \mathcal{S}\times\mathcal{A}\rightarrow\mathbb{R}$ is the reward function, and  $\gamma\in[0, 1)$ is a discount factor.
We denote the set $\mathbb{P}(\cdot)$ as all probability measures on the space in the bracket. Our goal is to find a policy $\pi$ that can choose an action to maximize the accumulated reward. Here we focus on the challenging tasks with continuous state and action spaces (i.e., there are infinite states and actions).

MBRL aims to learn a policy by integrating planning with the aid of a known or learned model~\citep{moerland2020model}, and an essential part of MBRL is to learn a transition function that characterizes the environment. The transition function above is defined over a given state, but we can generalize it to represent the transition from a state distribution $z\in\mathbb P(\mathcal{S})$. By calling the generalized transition function recursively, we can get the $n$-step generalized transition function, which is defined as:
%
%
\begin{definition}[Transition Functions]\label{def: Transition functions}
In a metric state space $(\mathcal{S}, d_\mathcal{S})$ and an action space $\mathcal{A}$, we can define the generalized transition function of \,$T_{\mathcal G}$ (over state distribution $z$), and the $n$-step generalized transition function of\, $T_{\mathcal{G}}^{n}$ (for fixed sequence of actions) \citep{asadi2018lipschitz} as 
\begin{equation}
\begin{split}
    T_{\mathcal G}\left(\bm s^{\prime} \mid \bm s, \bm a\right) & = \int T\left(\bm s^{\prime} \mid \bm s, \bm a\right)z(\bm s)d\bm s \\
    T_{\mathcal{G}}^{n}(\cdot \mid z) & = \underbrace{T_{\mathcal{G}}\left(\cdot \mid  \cdots T_{\mathcal{G}}\left(\cdot \mid z, \bm a_{0}\right) \cdots, \bm a_{n-1}\right)}_{n \text{ recursive calls}}.
\end{split}\label{eq: Transition functions}
\end{equation}
\end{definition}
Here the generalized transition gives the distribution of outcome under a certain state distribution. For MBRL, it is nontrivial to learn the transition function (i.e., the dynamics for a physical environment) because $\mathcal{S}$ can be high-dimensional. 
Several attempts have been made to learn such dynamics, while they have various limitations. For instance, 
probabilistic inference for learning control (PILCO)~\citep{deisenroth2011pilco} uses Gaussian processes to model the transition function, but the inference part does not scale well to high dimensions~\citep{langlois2019benchmarking}.
Stochastic ensemble value expansion (STEVE)~\citep{buckman2018sample} and adaptation augmented
model-based policy optimization (AMPO)~\citep{shen2020model} use scalable machine learning models, but there is no further discussion on how to learn such a model efficiently. Actually, high-dimensional state and action spaces usually require much more data samples~\citep{plaat2020model}. Monte Carlo tree search (MCTS)~\citep{silver2017mastering} can introduce human knowledge of the transition function to enhance learning, but it is restricted to cases where transition functions are totally known. 



\subsection{Hamiltonian Mechanics}
\label{sec: Analytical Mechanics}
Methods of analytical mechanics have been introduced to predict the evolution of dynamic systems such as pendulums. For example, Lagrangian neural networks \citep{lutter2018deep, cranmer2020lagrangian} and Hamiltonian neural networks \citep{greydanus2019hamiltonian} can be used to simulate dynamic systems. These papers focus on how to model the Lagrangian or the Hamiltonian.
One challenge for such methods is that 
the equations for the Lagrangian or the Hamiltonian are second-order differential equations, which are difficult to model in a general way. Besides, numerical solutions of second-order differential equations are prone to error accumulation. 

One natural idea is to reformulate second-order differential equations into first-order ones. Then, we can use an existing neural network~\citep{chen2018neural} to model these equations in the ODE form. In this paper, we concentrate on the Hamiltonian case, for which the first-order representation corresponds to Hamiltonian canonical equations \citep{junkins2009analytical}.\footnote{Hamiltonian canonical equations are widely used with clear physical meanings and more symmetric forms than the Lagrangian case.}
Specifically, in Hamiltonian mechanics, we use pairs of generalized coordinate and generalized momentum $(q_k, p_k)$ to completely describe a dynamic system ($k\in\{0, 1, \cdots, K\}$), where $K$ is the dimension of generalized coordinates.It is noted that $K$ can be intuitively interpreted as the degree of freedom. We denote $p_k$ and $q_k$ as canonical states, and they are minimal independent parameters which can describe the state of the system. We define Hamiltonian $\mathcal{H}: \mathbb{R}^{2K+1}\rightarrow\mathbb{R}$ as a function of these variables and time $t$. Then the evolution of the system satisfies Hamiltonian canonical equations \citep{junkins2009analytical}:
\begin{equation}
    \begin{split}
        \frac{\dif q_k}{\dif t} & = \frac{\partial\mathcal{H}}{\partial p_k}, \\
        \frac{\dif p_k}{\dif t} & = -\frac{\partial\mathcal{H}}{\partial q_k} + Q_k(t),
    \end{split}
    \label{eq: Hamiltonian canonical equations}
\end{equation}
where $k\in\{0, 1, \cdots, K\}$, and $Q_k(t)$ are the generalized forces which describe the effects of external forces. 

One advantage of using these equations is that they can describe general dynamic systems. 
Moreover, it is possible to incorporate prior knowledge into Hamiltonian canonical equations, e.g., by assuming a specific form of energy~\citep{sprangers2014reinforcement}.
In our case, prior knowledge can be available. For example, we may know the dimension $K$ of the generalized coordinate and generalized momentum, which only requires the knowledge of the 'degree of freedom' for a given system but makes a difference in training. Another example is about transfer learning: If we already learn the underlying dynamics, we can combine the learnt dynamics with other modules to transfer the learnt knowledge to a variety of different tasks. We empirically examine the these advantages in the experimental section. 

\section{Methodology}


\begin{figure}[t]
\centering
\includegraphics[height=6cm]{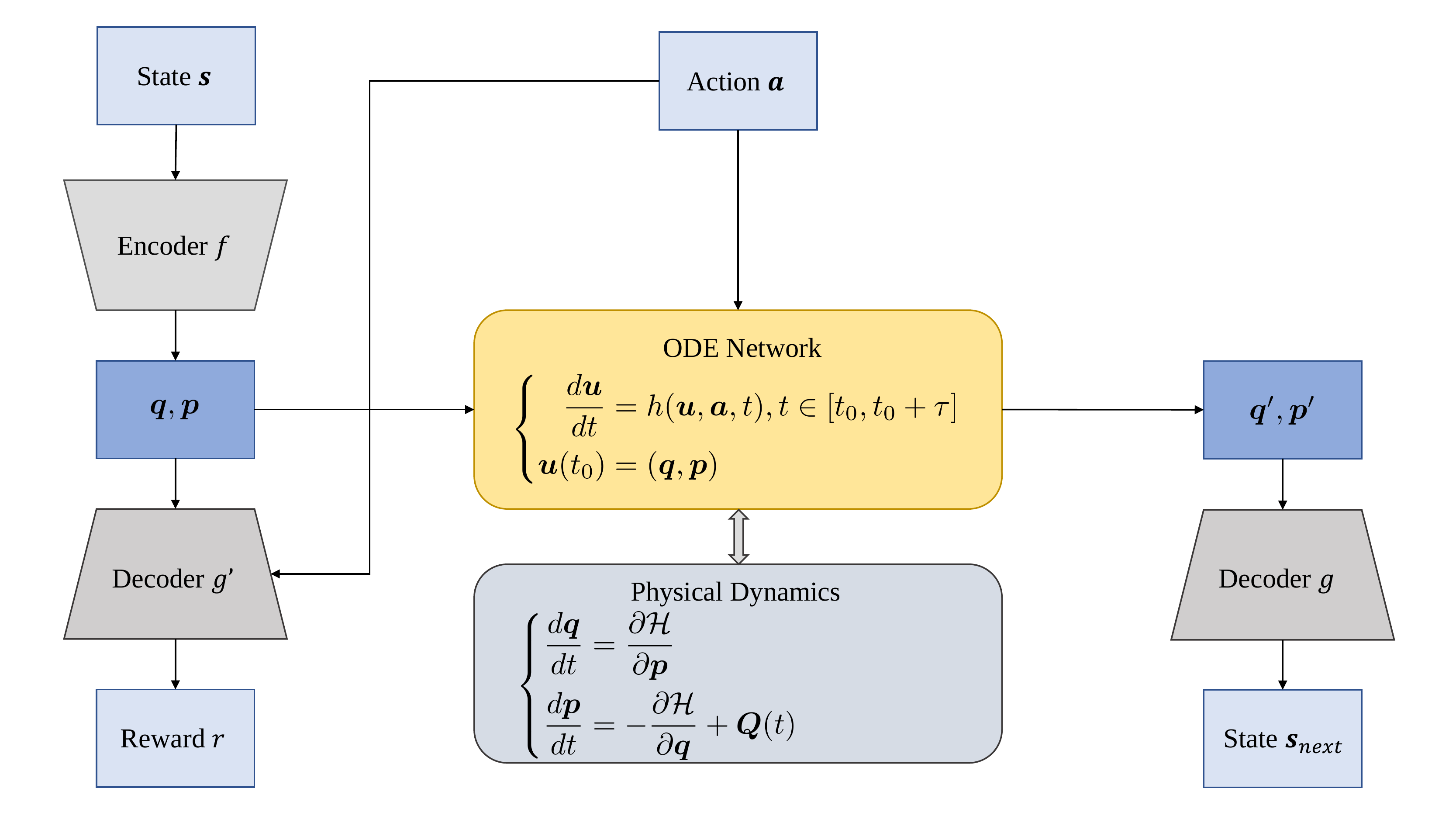}
\caption{NODA model structure. The state $\bm{s}$ goes through the encoder $f$ to get latent states $\bm q, \bm p$. The evolution of $\bm u = (\bm q, \bm p)$ defined by the ODE network is decoded by decoder $g$ to get the next state $\bm{s}_{next}$. The decoder $g'$ predicts reward $r$ using $\bm q$, $\bm p$ and action $\bm a$.}
\label{fig: NODA}
\end{figure}


We now formally present NODA, which consists of an auto-encoder and an ODE network. NODA aims to serve as a simulator of the real environment (i.e., a dynamic system) by learning the transition function and the reward function. Here we assume that the transition is deterministic, otherwise an SDE network \citep{li2020scalable} can be used instead. Then we can use NODA to assist reinforcement learning by generating imaginary trajectories, as outlined in Algorithm~\ref{alg: NODA in RL}. Besides, we also discuss the prior knowledge that can be incorporated with NODA.

\subsection{Modeling Hamiltonian Canonical Equations}

We focus on modeling Hamiltonian canonical equations, since after getting these equations, we know the continuous evolution of the system. For many RL environments such as MuJoCo~\citep{todorov2012mujoco}, the discretization of the continuous evolution is just the transition function. However, it may be non-trivial to get these equations for a real dynamic system\footnote{We provide an example to show the derivation of these equations in Appendix A, where we derive these equations under the setting of Pendulum-v0 task in Gym~\citep{brockman2016openai}.}. So we propose to use neural ODE or ODE networks \citep{chen2018neural, chen2021eventfn}  to model these first-order differential equations.

Specifically, Neural ODE~\citep{chen2018neural} provides an elegant framework to model differential equations. It is defined by $\frac{\dif h(t)}{\dif t} = f(h(t), t, \theta)$. Here $f$ is a neural network with parameter $\theta$, and we use an integrator to transform the input $h(0)$ to output $h(\tau)$, where $\tau$ is a given time horizon. Neural ODE can be viewed as a continuous version of ResNet~\citep{he2016identity}. 
Here we assume that the agent affects the environment through forces. So the generalized forces $\bm Q(t)$ in Eq.~\eref{eq: Hamiltonian canonical equations} correspond to the effects of action $\bm a$ in reinforcement learning, since generalized forces can describe all physical perturbations. 




\subsection{NODA Model Structure}
Hamiltonian canonical equations describe the evolution over canonical states, but the observed state $\bm s$ is not necessarily composed of canonical states $\bm q, \bm p$. Assuming that our state $\bm s\in\mathcal{S}$ contains the complete information of the canonical states, we firstly construct an auto-encoder to map state $\bm s$ to canonical states $\bm q, \bm p$.\footnote{The auto-encoder is widely used in machine learning \citep{ballard1987modular, ng2011sparse, kingma2013auto}.}
Specifically, in the auto-encoder part, we use a function $f:\mathcal{S}\rightarrow\mathbb{R}^{2K}$ to get the canonical states. Here we denote the concatenation of $(\bm{q}, \bm{p})$ as $\bm{u}$, so $\bm u = f(\bm s)$. We also need to restore the state from canonical states, so we further define a function $g:\mathbb{R}^{2K}\rightarrow\mathcal{S}$. After going through the ODE network (defined later), we can decode the evolved canonical states $\bm s'$ to get the next state: $\bm{s}_{\text{next}} = g(\bm s')$. 

Since the canonical states are concise but complete to describe the system, the auto-encoder can serve as a dimension reducer to extract and refine state $\bm s$ when there is much redundancy in the state space. We further assume that the reward is related to the canonical states and the action. Assuming $\mathcal{A}\subset\mathbb{R}^m$, we can use another decoder $g': \mathbb{R}^{2K}\times\mathcal{A}\rightarrow\mathbb{R}$ to get predicted reward $r$. More specifically, $r = g'(\bm u, \bm a)$.
Then we let the predicted canonical states go through an ODE network to evolve.


In the ODE network part, the encoded latent state $\bm u$ evolves through the ODE over a time interval $[t_0, t_0 + \tau]$. We can define the ODE network as a function
$\text{ODE}(h, \bm{u}, \bm{a}, t_0, t_0 + \tau)$, which returns the value of the solution of neural ODE at time $t_0 + \tau$. The neural ODE is described as:
\begin{equation}
    \frac{d\bm u}{dt} = h(\bm u, \bm a, t),~t\in [t_0,~t_0 + \tau],\\
    \label{eq: NODE}
\end{equation}
where $h$ is a neural network.

After combining the ODE network and the auto-encoder, we can get a full structure of NODA, which is shown in Figure~\ref{fig: NODA}. The inputs of NODA are the state $\bm s$ and the action $\bm a$, and the outputs of NODA are the predicted state $\bm{s}_{\text{next}}$ and the predicted reward $r$.  Our model is formally characterized as:
\begin{equation}
\begin{split}
        \bm{s}_{\text{next}} & = g(\text{ODE}(h, \bm u, \bm{a}, t_0, t_0 + \tau)),\\
        r & = g'(\bm u, \bm a).
\end{split}
    \label{eq: NODA}
\end{equation}
%
To learn the unknown parameters in $f, g$ and $h$, we define the total loss function for NODA as a convex combination of the state loss and the reward loss:
\begin{equation}
    \mathcal{L} = \mu(||g(\bm u)-\bm{s}||_2^2 + ||\bm{s}_{\text{next}}- \bm{s}_{\text{next}}^*||_2^2) + (1 - \mu)||r- r^*||_2^2,
    \label{eq: Loss function}
\end{equation}
where $\mu\in(0, 1)$, and $*$ denotes the ground truth. 
By minimizing this objective, we jointly train the RL agent and our model using interactions with the environment, and use our model together with the real environment to provide training data for the RL agent after certain training steps. More details are provided in Appendix C.


\begin{wrapfigure}{R}{0.54\textwidth}
    \begin{minipage}{0.54\textwidth}
      \begin{algorithm}[H]
         \caption{NODA for Reinforcement Learning}
  \label{alg: NODA in RL}
\begin{algorithmic}

  \STATE {\bfseries Input:} Reinforcement learning agent $s$, environment $env$, NODA model $m$
  \REPEAT
  \STATE Collect data from interactions with $env$
  \STATE Use interactions with $env$ to train $m$
  \STATE Use interactions with both $env$ and $m$ to train $s$
  \UNTIL{Certain number of steps}
  \RETURN Reinforcement learning agent $s$, NODA model $m$
\end{algorithmic}
\end{algorithm}
 \end{minipage}
  \end{wrapfigure}

As a model of Hamiltonian canonical equations, NODA can describe general dynamic systems, and the form of canonical states can be very general. Moreover, NODA corresponds with real dynamics of the environment, which allows us to use our knowledge of the Hamiltonian. For example, we can determine $2K$ with little physical knowledge, which is the dimension of the canonical state space of a dynamic system, and we demonstrate the effects of such dimension in the experimental part. For stronger prior knowledge, we can replace the ODE network by the derived form of Hamiltonian canonical equations, or do transfer learning between similar dynamic systems such as transferring the auto-encoder or the ODE network. We also demonstrate the effects of transfer learning in the experimental part. This makes it possible for us to make trade-offs between human expert knowledge and the required number of training samples.



\section{Theoretical Analysis}
\label{sec: Theoretical Analysis}

In this section, we present error bounds for both multi-step transition and values for NODA.

\subsection{Lipschitz Dynamic Systems and NODA}
Former work \citep{asadi2018lipschitz} studied using Lipschitz function groups to model the transition of the environment. However, there are error bounds only if the real environment is Lipschitz, and that work did not mention what kind of environments have a Lipschitz transition. Theorem~\ref{th: Lipschitz dynamic systems} proves that under certain conditions, the transition functions of dynamic systems are Lipschitz, which is the basis of transition and state value error bounds. 

Before stating the theorem, we firstly define Lipschitz continuity, which measures the maximum magnitude of enlargement of the perturbation of input at the output side. For reinforcement learning, Definition~\ref{def: Lipschitz models} is the condition of uniformly Lipschitz continuous in the former work \citep{asadi2018lipschitz}, and it plays a central role in our theorems.

\begin{definition}[Lipschitz Models]
In a metric state space $(\mathcal{S}, d_\mathcal{S})$ and an action space $\mathcal{A}$, we say that $f$ is a Lipschitz model if the Lipschitz constant \citep{asadi2018lipschitz}
\begin{equation}
K_F:= \sup_{a\in \mathcal{A}}\sup_{\bm s_{1}, \bm s_{2} \in \mathcal{S}} \frac{d_{\mathcal{S}}\left(f\left(\bm s_{1}, \bm a\right), f\left(\bm s_{2}, \bm a\right)\right)}{d_{\mathcal{S}}\left(\bm s_{1}, \bm s_{2}\right)} < \infty.
\label{eq: Lipschitz models}
\end{equation}
\label{def: Lipschitz models}
\end{definition}
%
\begin{theorem}[Lipschitz Dynamic Systems]
For a dynamic system with a $C^2$ continuous Hamiltonian $\mathcal{H}: \mathbb{R}^{2K+1}\rightarrow\mathbb{R}$, if the state $\bm{s}$ is in a bounded closed set $\mathcal{S}\subset \mathbb{R}^{l}$, the evolution time equals $\tau$, the generalized force $Q_k$ is $C^1$ continuous with respect to states and bounded (for any dimension $k$), and the transformation from states to canonical states $f^*: \mathcal{S}\rightarrow\mathbb{R}^{2K}$ is Lipschitz, then the canonical states are Lipschitz with respect to time, and the environment with respect to canonical states is Lipschitz. Additionally, if the transformation from canonical states to states $g^*: \mathbb{R}^{2K}\rightarrow\mathcal{S}$ is Lipschitz, then the environment is Lipschitz, which means (here $\bm s\neq \bm s'$)
\begin{equation}
\sup_{a\in \mathcal{A}}\sup _{\bm s, \bm s' \in \mathcal{S}} \frac{d_{\mathcal{S}}\left(\bm s_{\text{next}}, \bm s_{\text{next}}'\right)}{d_{\mathcal{S}}\left(\bm s, \bm s'\right)} < \infty.
\label{eq: Dynamic systems as a Lipschitz model}
\end{equation}
\label{th: Lipschitz dynamic systems}
\end{theorem}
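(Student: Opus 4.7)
The plan is to prove the three conclusions in sequence, reducing everything to classical continuous-dependence theory for ODEs by first confining the dynamics to a compact region so all relevant Lipschitz constants can be extracted uniformly. The key inputs I would combine are: the $C^2$ regularity of $\mathcal{H}$ (so $\partial\mathcal{H}/\partial p_k$ and $\partial\mathcal{H}/\partial q_k$ are $C^1$ and locally Lipschitz), the $C^1$ boundedness of $Q_k$, the Lipschitz property of $f^*$ and $g^*$, and Gronwall's inequality. Throughout let $F(\bm u, t) = (\partial\mathcal{H}/\partial\bm p,\; -\partial\mathcal{H}/\partial\bm q + Q)$ denote the right-hand side of the Hamiltonian canonical system \eref{eq: Hamiltonian canonical equations}.

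First I would confine the reachable set. Since $\mathcal{S}$ is compact and $f^*$ is Lipschitz, the initial canonical-state set $U_0 := f^*(\mathcal{S}) \subset \mathbb{R}^{2K}$ is compact. Picard–Lindelof applied to the $C^1$ field $F$ gives a unique trajectory $\bm u(t;\bm u_0)$ from each $\bm u_0 \in U_0$, and a continuation argument (using that $\nabla\mathcal{H}$ is bounded on bounded sets and $Q$ is globally bounded) shows that over the fixed horizon $[0,\tau]$ the reachable set $U := \{\bm u(t;\bm u_0): \bm u_0 \in U_0,\, t\in[0,\tau]\}$ lies in some compact ball $B \subset \mathbb{R}^{2K}$. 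On $B\times[0,\tau]$ the field $F$ is bounded by some $M$ and Lipschitz in $\bm u$ with some constant $L$ (both uniform in the action $\bm a$, since $\bm a$ enters only through the bounded $C^1$ force $Q$).

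The two interior conclusions then drop out. For Lipschitz-in-time, the bound $\|F\|\le M$ on $B\times[0,\tau]$ gives $\|\bm u(t)-\bm u(t')\|\le M|t-t'|$ directly from the integral form. For Lipschitz dependence on the initial canonical state, I would write
\begin{equation*}
\bm u_i(t) - \bm u_{i,0} = \int_0^t F(\bm u_i(s), s)\,ds \quad (i = 1,2),
\end{equation*}
subtract, use the $L$-Lipschitz bound on $F$, and apply Gronwall's inequality to obtain
\begin{equation*}
\|\bm u_1(\tau) - \bm u_2(\tau)\| \le e^{L\tau}\,\|\bm u_{1,0} - \bm u_{2,0}\|,
\end{equation*}
which is exactly the environment's Lipschitz property with respect to canonical states. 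Under the extra hypothesis that $g^*$ is Lipschitz with constant $L_{g^*}$, writing $\bm s_{\text{next}} = g^*(\bm u(\tau;f^*(\bm s)))$ and composing the Lipschitz constants of $f^*$, the flow, and $g^*$ yields $d_\mathcal{S}(\bm s_{\text{next}}, \bm s'_{\text{next}}) \le L_{g^*}\,e^{L\tau}\,L_{f^*}\,d_\mathcal{S}(\bm s, \bm s')$; taking the supremum over $\bm a \in \mathcal{A}$ gives \eref{eq: Dynamic systems as a Lipschitz model}.

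The main obstacle is the compactness step: a generic $C^2$ Hamiltonian can permit finite-time blowup of trajectories, so confining the reachable set to a compact $B$ is not automatic. I expect to handle this either by leaning on the implicit standing assumption that the physical system is well-defined on the horizon $[0,\tau]$ (so trajectories exist and hence form a continuous image of the compact set $U_0\times[0,\tau]$, which is then compact), or by adding a mild growth condition on $\nabla\mathcal{H}$ to rule out escape to infinity. Once compactness is secured, every remaining ingredient — local Lipschitzness of $C^1$ fields, Gronwall, and composition of Lipschitz maps — is standard.
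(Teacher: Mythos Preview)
Your proposal is correct and follows essentially the same route as the paper: bound the canonical states via compactness, use boundedness of the vector field to get Lipschitz-in-time, apply a Gronwall-type differential inequality to get Lipschitz dependence on the initial canonical state (the paper's bound $\exp\{\tau\|\sup|\dif\bm I/\dif\bm u|\|_\infty\}$ is exactly your $e^{L\tau}$), and then compose with $f^*$ and $g^*$. You are in fact more careful than the paper about the reachable-set compactness step---the paper implicitly assumes the evolved canonical states remain in a bounded closed set without justifying why trajectories cannot escape in time $\tau$, whereas you flag this and propose handling it via continuation or the standing well-posedness assumption.
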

We provide a detailed proof in Appendix B, and here we only give a sketch. 

\begin{proof}
By noting that the composition of Lipschitz functions is Lipschitz, we only need to analyse the ODE part. The conditions of $\mathcal{H}$ and $Q_k$ guarantees that the right hand side of the ODE is Lipschitz, which leads to the Lipschitz condition in this part.
\end{proof}

Actually, the conditions in Theorem~\ref{th: Lipschitz dynamic systems} are not hard to satisfy. The Hamiltonian $\mathcal{H}$ itself already has derivatives with respect to $\bm{q}$, $\bm{p}$ and $t$. We just further assume that the Hamiltonian $\mathcal{H}$'s derivatives with respect to $\bm{q}$, $\bm{p}$ and $t$ is differentiable continuous. Actually, for many dynamic systems such as spring-mass systems or 
three-body systems, the Hamiltonian is $C^{\infty}$ continuous.


As an imitation of Hamiltonian canonical equations, it is natural to validate if the NODA model is Lipschitz. We analyse this in Theorem~\ref{th: Lipschitz NODA} in which the Lipschitz model will be one of the conditions in error bounds. 

\begin{theorem}[Lipschitz NODA]
For the NODA model, if the state $\bm{s}$ is in a bounded closed set $\mathcal{S}\subset \mathbb{R}^{l}$, $f: \mathcal{S}\rightarrow\mathbb{R}^{2K}$ is Lipschitz, the evolution time equals $\tau$, the action $a_m$ is $C^1$ continuous with respect to states and bounded (for any dimension $m$), function $h$ is $C^1$ continuous, then canonical states are Lipschitz with respect to time, and NODA with respect to canonical states is Lipschitz. Additionally, if the transformation from canonical states to states $g: \mathbb{R}^{2K}\rightarrow\mathcal{S}$ is Lipschitz, then NODA is Lipschitz, which means (here $\bm s\neq \bm s'$)
\begin{equation}
\sup_{\bm a\in \mathcal{A}}\sup _{\bm s, \bm s' \in \mathcal{S}} \frac{d_{\mathcal{S}}\left(\bm s_{\text{next}}, \bm s_{\text{next}}'\right)}{d_{\mathcal{S}}\left(\bm s, \bm s'\right)} < \infty.
\label{eq: NODAs as a Lipschitz model}
\end{equation}
\label{th: Lipschitz NODA}
\end{theorem}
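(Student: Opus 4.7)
The plan is to mirror the proof of Theorem~\ref{th: Lipschitz dynamic systems}, exploiting the same three-step decomposition: $\bm s \xmapsto{f} \bm u \xmapsto{\text{ODE}} \bm u' \xmapsto{g} \bm s_{\text{next}}$. Since the composition of Lipschitz maps is Lipschitz with the product of the constants, and $f$ (by assumption) and $g$ (under the additional assumption) are already Lipschitz, the whole argument reduces to showing that the flow map induced by the ODE network, i.e., $\bm u(0)\mapsto \bm u(\tau)$, is Lipschitz uniformly in $\bm a\in\mathcal A$.

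First I would pin down the domain on which the relevant quantities live. Because $\mathcal S$ is a bounded closed set in $\mathbb R^l$ and $f$ is Lipschitz, the image $f(\mathcal S)$ is bounded in $\mathbb R^{2K}$. A standard a~priori bound on ODE solutions (using that $h$ is $C^1$, hence bounded on any compact set, together with the boundedness assumption on $\bm a$) shows that every trajectory $\bm u(t)$ starting from $f(\mathcal S)$ stays in some compact set $\mathcal U\subset\mathbb R^{2K}$ for all $t\in[t_0,t_0+\tau]$. On the compact set $\mathcal U\times\mathcal A\times[t_0,t_0+\tau]$, the $C^1$ assumption on $h$ and the regularity assumption on $\bm a$ imply that $h(\,\cdot\,,\bm a,t)$ is Lipschitz in its first argument with a constant $L_h$ that is uniform in $(\bm a,t)$; this is the step where the $C^1$ hypotheses are actually cashed in, via the mean value theorem on a compact domain.

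Next I would establish Lipschitz continuity of the flow by a Grönwall argument. For two initial conditions $\bm u_1(0),\bm u_2(0)\in f(\mathcal S)$ and the same fixed $\bm a$, the difference $\Delta(t):=\bm u_1(t)-\bm u_2(t)$ satisfies
\begin{equation*}
\|\Delta(t)\| \;\le\; \|\Delta(0)\| + L_h\int_{t_0}^{t}\|\Delta(s)\|\,\dif s,
\end{equation*}
so Grönwall's inequality yields $\|\Delta(\tau)\|\le e^{L_h\tau}\|\Delta(0)\|$. Because the bound $L_h$ was uniform in $\bm a$, so is the flow's Lipschitz constant $e^{L_h\tau}$. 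This simultaneously gives that canonical states are Lipschitz in time (take $\bm u_1=\bm u_2$ and integrate the bound on $\|h\|$ instead) and that NODA is Lipschitz with respect to canonical states. Under the additional Lipschitz assumption on $g$, composing with $g$ then gives the final estimate claimed in~\eref{eq: NODAs as a Lipschitz model}, with overall constant bounded by $K_g\,e^{L_h\tau}K_f$.

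I expect the main obstacle to be purely bookkeeping: carefully justifying the compact invariant set $\mathcal U$ on which $h$ is uniformly Lipschitz, and making sure the dependence on $\bm a$ is tracked so that the final $\sup_{\bm a\in\mathcal A}$ can be taken without the constant blowing up. Everything else is a direct transcription of the proof of Theorem~\ref{th: Lipschitz dynamic systems}, with the generalized forces and $\partial\mathcal H/\partial(\bm q,\bm p)$ replaced by the neural network $h$, whose $C^1$ hypothesis plays exactly the role that $C^2$ continuity of $\mathcal H$ plays there.
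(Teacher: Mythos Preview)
Your proposal is correct and follows essentially the same route as the paper: compactness of the domain yields a uniform bound on $h$ and its derivative, a Gr\"onwall-type estimate then controls the flow map $\bm u(0)\mapsto\bm u(\tau)$ with constant $e^{L_h\tau}$, and composition with the Lipschitz encoder/decoder finishes the argument. The paper's version writes the Gr\"onwall step as a coordinatewise $\ell_\infty$ differential inequality rather than naming the lemma, but the structure (and the explicit exponential constant) is identical.
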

The proof is in Appendix B with the similar idea as Theorem~\ref{th: Lipschitz dynamic systems}.

\subsection{Uniform Error Bounds for Multi-step Transition}

Here we firstly define a metric called Wasserstein Metric in Definition~\ref{def: Wasserstein metric}. Wasserstein metric describes how to move one distribution to another with the least cost. It has been applied to generative adversarial networks \citep{arjovsky2017wasserstein} and reinforcement learning \citep{asadi2018lipschitz}.
\begin{definition}[Wasserstein Metric]
In a metric space $(X, d)$, the Wasserstein
metric between two probability distributions $z_1$ and $z_2$ in $\mathbb{P}(X)$ is
\begin{equation}
W\left(z_{1}, z_{2}\right):=\inf _{j \in \Lambda} \iint j\left(\bm s_{1}, \bm s_{2}\right) d\left(\bm s_{1}, \bm s_{2}\right) d \bm s_{2} d \bm s_{1},
\label{eq: Wasserstein metric}
\end{equation}
where $\Lambda$ is a set of all joint distributions $j$ on $X\times X$ with marginal distributions $z_1$ and $z_2$ \citep{arjovsky2017wasserstein}.
\label{def: Wasserstein metric}
\end{definition}

Under the conditions of Theorem~\ref{th: Lipschitz dynamic systems} and Theorem~\ref{th: Lipschitz NODA}, the dynamic system and the NODA model are Lipschitz models. On that basis, we can give uniform error bounds for multi-step transition, which is shown in Theorem~\ref{th: Transition error bounds}. The theorem tells us that the multi-step transitions of the NODA model and the environment do not differ much under certain conditions, which gives support for using the NODA model as the imaginary environment.

\begin{theorem}[Transition Error Bounds]
Under the conditions in Theorem~\ref{th: Lipschitz dynamic systems} and Theorem~\ref{th: Lipschitz NODA}, we already know that the transition function $T_{\mathcal G}\left(\bm s^{\prime} \mid \bm s, \bm a\right)$ of the environment and the transition function $\widehat T_{\mathcal G}\left(\bm s^{\prime} \mid \bm s, \bm a\right)$ of the NODA model are Lipschitz. We denote the Lipschitz constants of these transition functions as $K_1$ and $K_2$, respectively. Let $\bar{K} = \min\{K_1, K_2\}$, then $\forall n\geq 1$:
\begin{equation}
\delta(n):=W\left(\widehat T_{\mathcal{G}}^{n}(\cdot \mid \mu), T_{\mathcal{G}}^{n}(\cdot \mid \mu)\right) \leq \Delta \sum_{i=0}^{n-1}(\bar{K})^{i},
\label{eq: Transition n-step bounds}
\end{equation}
where $\Delta$ is an upper bound of Wasserstein metric between $\widehat T\left(\cdot \mid \bm s, \bm a\right)$ and $T\left(\cdot \mid \bm s, \bm a\right)$. This upper bound is tight for linear and deterministic transitions.
\label{th: Transition error bounds}
\end{theorem}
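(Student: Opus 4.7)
The plan is to proceed by induction on $n$, imitating the triangle-inequality argument that \citet{asadi2018lipschitz} used for finite-action Lipschitz MDPs, but adapted to our continuous-action setting where $\widehat{T}_{\mathcal{G}}$ and $T_{\mathcal{G}}$ are the generalized transitions from Definition~\ref{def: Transition functions}.

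\textbf{Base case.} For $n=1$, I would apply the definition of $\Delta$ together with the convexity of the Wasserstein metric in its arguments. Concretely, since $\widehat{T}_{\mathcal{G}}(\cdot\mid\mu)$ and $T_{\mathcal{G}}(\cdot\mid\mu)$ are both mixtures of the point-conditional laws $\widehat{T}(\cdot\mid\bm{s},\bm{a}_0)$ and $T(\cdot\mid\bm{s},\bm{a}_0)$ with the same mixing distribution $\mu$, one obtains
\[
W\bigl(\widehat{T}_{\mathcal{G}}(\cdot\mid\mu),\,T_{\mathcal{G}}(\cdot\mid\mu)\bigr) \le \int W\bigl(\widehat{T}(\cdot\mid \bm s,\bm a_0),\,T(\cdot\mid \bm s,\bm a_0)\bigr)\,\mu(d\bm s) \le \Delta,
\]
which matches $\Delta\sum_{i=0}^{0}\bar K^{i}=\Delta$.

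\textbf{Inductive step.} Assume the bound holds at step $n$. Using the recursive definition of the $n$-step generalized transition, I would split $\delta(n+1)$ by the triangle inequality through an intermediate distribution, choosing the split that lets me use the tighter Lipschitz constant. Two symmetric decompositions are available:
\begin{align*}
\delta(n+1) &\le W\bigl(\widehat T_{\mathcal G}(\cdot\mid \widehat T^{n}_{\mathcal G}(\cdot\mid\mu),\bm a_n),\;\widehat T_{\mathcal G}(\cdot\mid T^{n}_{\mathcal G}(\cdot\mid\mu),\bm a_n)\bigr) \\
&\quad + W\bigl(\widehat T_{\mathcal G}(\cdot\mid T^{n}_{\mathcal G}(\cdot\mid\mu),\bm a_n),\; T_{\mathcal G}(\cdot\mid T^{n}_{\mathcal G}(\cdot\mid\mu),\bm a_n)\bigr),
\end{align*}
and the mirror split that inserts $T_{\mathcal G}(\cdot\mid \widehat T^{n}_{\mathcal G}(\cdot\mid\mu),\bm a_n)$. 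In the first split the first term is controlled by the Lipschitzness of $\widehat T_{\mathcal G}$ with constant $K_2$, and by the induction hypothesis it is at most $K_2\,\delta(n)$; the second term is bounded by $\Delta$ via the same mixture-convexity argument as in the base case applied to the distribution $T^{n}_{\mathcal G}(\cdot\mid\mu)$. The mirror split yields $K_1\,\delta(n)+\Delta$. Taking the minimum gives $\delta(n+1)\le \bar K\,\delta(n)+\Delta$, and unrolling this linear recursion from $\delta(1)\le\Delta$ yields exactly $\Delta\sum_{i=0}^{n}\bar K^{i}$.

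\textbf{Main obstacle and loose ends.} The step that requires the most care, and is the place where the proof in the discrete-action setting has to be genuinely adapted, is the bound $W\bigl(\widehat T_{\mathcal G}(\cdot\mid z,\bm a),T_{\mathcal G}(\cdot\mid z,\bm a)\bigr)\le \Delta$ for any state distribution $z$. I would justify it by constructing an explicit coupling: for each $\bm s$, let $\gamma_{\bm s}$ be an optimal coupling realizing $W(\widehat T(\cdot\mid \bm s,\bm a),T(\cdot\mid \bm s,\bm a))$, and then form the mixture coupling $\int \gamma_{\bm s}\,z(d\bm s)$, whose marginals are exactly $\widehat T_{\mathcal G}(\cdot\mid z,\bm a)$ and $T_{\mathcal G}(\cdot\mid z,\bm a)$, with expected cost $\le \int \Delta\, z(d\bm s)=\Delta$. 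The second delicate point is the Lipschitz bound on the generalized transition: I need $W(\widehat T_{\mathcal G}(\cdot\mid z_1,\bm a),\widehat T_{\mathcal G}(\cdot\mid z_2,\bm a))\le K_2\,W(z_1,z_2)$, which follows from Theorem~\ref{th: Lipschitz NODA} together with the Kantorovich--Rubinstein dual characterization, by pushing an optimal coupling of $(z_1,z_2)$ through the (deterministic in our case) NODA map. The tightness claim for linear deterministic transitions I would verify with the canonical example $\widehat T(\bm s,\bm a)=K\bm s$, $T(\bm s,\bm a)=K\bm s+\Delta$, where both constituent Wasserstein errors meet their upper bounds at every step, so the geometric sum is attained exactly.
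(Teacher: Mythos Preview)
Your proposal is correct and follows essentially the same approach as the paper. The paper's own proof is little more than a pointer: it observes that the bound of \citet{asadi2018lipschitz} was stated for a fixed action sequence, and that because the Lipschitz constants in Theorems~\ref{th: Lipschitz dynamic systems} and~\ref{th: Lipschitz NODA} are uniform over $\bm a\in\mathcal A$, the same inductive argument yields the bound uniformly for all action sequences, hence for continuous action spaces. Your induction with the two symmetric triangle-inequality splits, the mixture-coupling justification of the one-step $\Delta$ bound, and the tightness example are precisely the contents of that cited argument, spelled out in more detail than the paper itself provides.
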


The original theorem~\citep{asadi2018lipschitz} gives a a bound for a fixed action sequence. However, here our definitions of Lipschitz environments and models are uniform for all actions. So, by using the original proof, we give a same error bound for all possible action sequences. Thus, we get a uniform error bound under the continuous action space. This concludes the proof. 

\begin{theorem}
\textbf{(Value Error Bounds)} Under all the conditions in Theorem~\ref{th: Transition error bounds}, if the reward function $R(\bm s)$ is uniformly Lipschitz for any action, which means we can define
\begin{equation}
K_R:= \sup_{a\in \mathcal{A}} \sup _{\bm s_{1}, \bm s_{2} \in \mathcal{S}} \frac{\left|R\left(\bm s_{1}, \bm a\right)-R\left(\bm s_{2}, \bm a\right)\right|}{d_\mathcal{S}\left(\bm s_{1}, \bm s_{2}\right)} < \infty.
\label{eq: Lipschitz reward function}
\end{equation}
If we define state values as
\begin{equation}
V_{T}(\bm s):=\sum_{n=0}^{\infty} \gamma^{n} \int T_{\mathcal{G}}^{n}\left(\bm s^{\prime} \mid \delta_{\bm s}\right) R\left(\bm s^{\prime}\right) d \bm s^{\prime},
\label{eq: State values}
\end{equation}
where $\delta_{\bm s}$ means the probability that the state being $\bm s$ equals 1.
Then $\forall \bm s\in\mathcal{S}$ and $\bar{K}$ (defined in Theorem~\ref{th: Transition error bounds})$\in[0, \frac{1}{\gamma}]$, we have
\begin{equation}
\left|V_{T}(\bm s)-V_{\widehat{T}}(\bm s)\right| \leq \frac{\gamma K_{R} \Delta}{(1-\gamma)(1-\gamma \bar{K})}.
\label{eq: Value error bounds}
\end{equation}
\label{th: Value error bounds}
\end{theorem}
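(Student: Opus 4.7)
The plan is to reduce the value-difference bound to the transition-error bound from Theorem~\ref{th: Transition error bounds} via Kantorovich--Rubinstein duality, and then sum a geometric series. First, I would expand both $V_T(\bm s)$ and $V_{\widehat T}(\bm s)$ using the series definition in \eqref{eq: State values} and apply the triangle inequality to get
\begin{equation*}
\left|V_T(\bm s)-V_{\widehat T}(\bm s)\right| \;\le\; \sum_{n=0}^{\infty}\gamma^{n}\left|\int R(\bm s')\bigl[T_{\mathcal G}^{n}(\bm s'\mid\delta_{\bm s})-\widehat T_{\mathcal G}^{n}(\bm s'\mid\delta_{\bm s})\bigr]d\bm s'\right|.
\end{equation*}
The $n=0$ term vanishes because both rollouts start from $\delta_{\bm s}$.

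Next, for each $n\ge 1$, I would invoke Kantorovich--Rubinstein duality. The hypothesis \eqref{eq: Lipschitz reward function} says $R(\cdot,\bm a)$ is $K_R$-Lipschitz uniformly in $\bm a$, so along any fixed action sequence the integrand $R$ (viewed as a function of the terminal state) is $K_R$-Lipschitz. Duality then gives
\begin{equation*}
\left|\int R(\bm s')\bigl[T_{\mathcal G}^{n}(\bm s'\mid\delta_{\bm s})-\widehat T_{\mathcal G}^{n}(\bm s'\mid\delta_{\bm s})\bigr]d\bm s'\right| \;\le\; K_R\,W\bigl(T_{\mathcal G}^{n}(\cdot\mid\delta_{\bm s}),\widehat T_{\mathcal G}^{n}(\cdot\mid\delta_{\bm s})\bigr) \;\le\; K_R\,\Delta\sum_{i=0}^{n-1}\bar K^{i},
\end{equation*}
where the second inequality is exactly Theorem~\ref{th: Transition error bounds} applied to the Dirac initial distribution $\mu=\delta_{\bm s}$.

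Combining these and swapping the two sums yields
\begin{equation*}
\left|V_T(\bm s)-V_{\widehat T}(\bm s)\right| \;\le\; K_R\Delta\sum_{n=1}^{\infty}\gamma^{n}\sum_{i=0}^{n-1}\bar K^{i} \;=\; \frac{K_R\Delta}{1-\bar K}\left(\sum_{n=1}^{\infty}\gamma^{n}-\sum_{n=1}^{\infty}(\gamma\bar K)^{n}\right),
\end{equation*}
valid when $\bar K\ne 1$. The hypothesis $\bar K\in[0,1/\gamma]$ (strictly less than $1/\gamma$ in practice) guarantees convergence of both series to $\gamma/(1-\gamma)$ and $\gamma\bar K/(1-\gamma\bar K)$, and a short algebraic simplification of the difference produces the claimed $\gamma K_R\Delta/[(1-\gamma)(1-\gamma\bar K)]$. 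The edge case $\bar K=1$ can be handled by taking a limit, since $\sum_{i=0}^{n-1}\bar K^i=n$ there and $\sum_n n\gamma^n=\gamma/(1-\gamma)^2$ agrees with the limit of the general formula.

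The only nontrivial step is the duality invocation: it requires the Lipschitz constant of $R$ to be independent of the action, which is exactly why \eqref{eq: Lipschitz reward function} is phrased uniformly over $\bm a\in\gA$. Everything else is bookkeeping --- applying Theorem~\ref{th: Transition error bounds}, interchanging summations (justified by nonnegativity of the summands), and telescoping two geometric series. Thus the main ``obstacle'' is conceptual rather than computational: one must recognize that the uniform Lipschitz hypothesis on $R$ plus the uniform transition bound from the previous theorem together convert a value-function gap into a weighted sum of Wasserstein gaps, which the earlier theorem already controls.
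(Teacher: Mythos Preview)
Your proposal is correct and follows essentially the same route the paper intends: the paper's own proof is just a pointer to \citet{asadi2018lipschitz}, observing that because the Lipschitz constants for both the transition (Theorem~\ref{th: Transition error bounds}) and the reward (\eqref{eq: Lipschitz reward function}) are taken uniformly over $\bm a\in\mathcal A$, the original single-action argument goes through verbatim for every action sequence, and hence uniformly. Your write-up is precisely that argument made explicit --- Kantorovich--Rubinstein duality to convert the integrated reward gap into $K_R\cdot W(\cdot,\cdot)$, then Theorem~\ref{th: Transition error bounds}, then the double geometric sum --- and your handling of the $\bar K=1$ edge case and the uniformity-in-action point matches the paper's emphasis exactly.
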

We put the theorem of the uniform error bounds of state values here, and defer its proof and more detailed discussions of theorems to Appendix B.

\section{Experiments}
In this section, we validate the efficiency of NODA's learning and its ability of boosting both MFRL and MBRL methods.
\subsection{Experimental Setup}
\paragraph{Baseline methods}
One natural baseline method for NODA is called AE, which means replacing the ODE network in NODA by an MLP. For physical simulation tasks, we choose AE and Hamiltonian neural network (HNN) \citep{greydanus2019hamiltonian} as baseline methods. For MFRL methods, we choose SAC and TD3 as our baselines. We modify the code from spinning up~\citep{SpinningUp2018} to use GPU in training. For MBRL methods, we compare our approach with state-of-the-art methods in the literature. Specifically, we focus on two methods on visual control tasks, Dreamer~\citep{hafner2019dream} and BIRD~\citep{zhu2020bridging}, as our MBRL baselines. We reproduce their results by their released codes respectively.

\paragraph{Implementation}
We implement NODA mainly by using pytorch~\citep{NEURIPS2019_9015} and torchdiffeq (https://github.com/rtqichen/torchdiffeq) as the implementation of the ODE network~\citep{chen2018neural, chen2021eventfn}. We integrate NODA with MFRL methods by using it to generate imaginary trajectories and compare with MFRL baselines. When comparing with the two MBRL baselines, we combine NODA with Dreamer to get our approach: NODA-Dreamer, and we implement this in TensorFlow as Dreamer does. We use tfdiffeq (https://github.com/titu1994/tfdiffeq) as the implementation of the ODE network, which runs entirely on Tensorflow Eager Execution. We refer to Appendix C for more implementation details.

\paragraph{Tasks} For physical simulation tasks, we use the setting of pixel pendulum and real pendulum in the paper of HNN~\citep{greydanus2019hamiltonian}. The pixel pendulum task aims at predicting the next frame of the pendulum using former adjacent frames, and the real pendulum task does the same thing using physical values as input. Besides, we use NODA to learn the transition part in the Ant-v3 task (actions are sampled randomly) in Gym~\citep{brockman2016openai} to study the parameter sensitivity and transferability of NODA. For MFRL methods, we choose 4 MuJoCo environments in Gym to compare the performance of TD3, SAC, AE-SAC and NODA-SAC. For MBRL methods, since the two baselines themselves are evaluated on DeepMind Control Suite (DMC)~\citep{tassa2018deepmind}, we also evaluate NODA-Dreamer on 4 environments in DMC, in order to make a fair comparison with them. More experimental details can be found in Appendix D.

\subsection{Simulation Effectiveness}
\label{sec: simulation effectiveness}

The testing loss curves of HNN, AE and NODA over two physical environments are shown in Figure~\ref{fig: simulation effectiveness and sensitivity} (a) and (b). As the training loss functions among models can be different, we choose to compare the well-defined testing loss. In each experiment, the number of NODA's parameters equals to that of AE's parameters, and it is no more than that of HNN's parameters. In both tasks, NODA converges quickly, and achieves the best testing loss over only thousands of training iterations. It suggests that the inductive bias introduced by NODA that the system obeys Hamiltonian canonical equations accelerates training.

\begin{figure}[t]
\subfloat[]{
\includegraphics[width=0.25\linewidth]{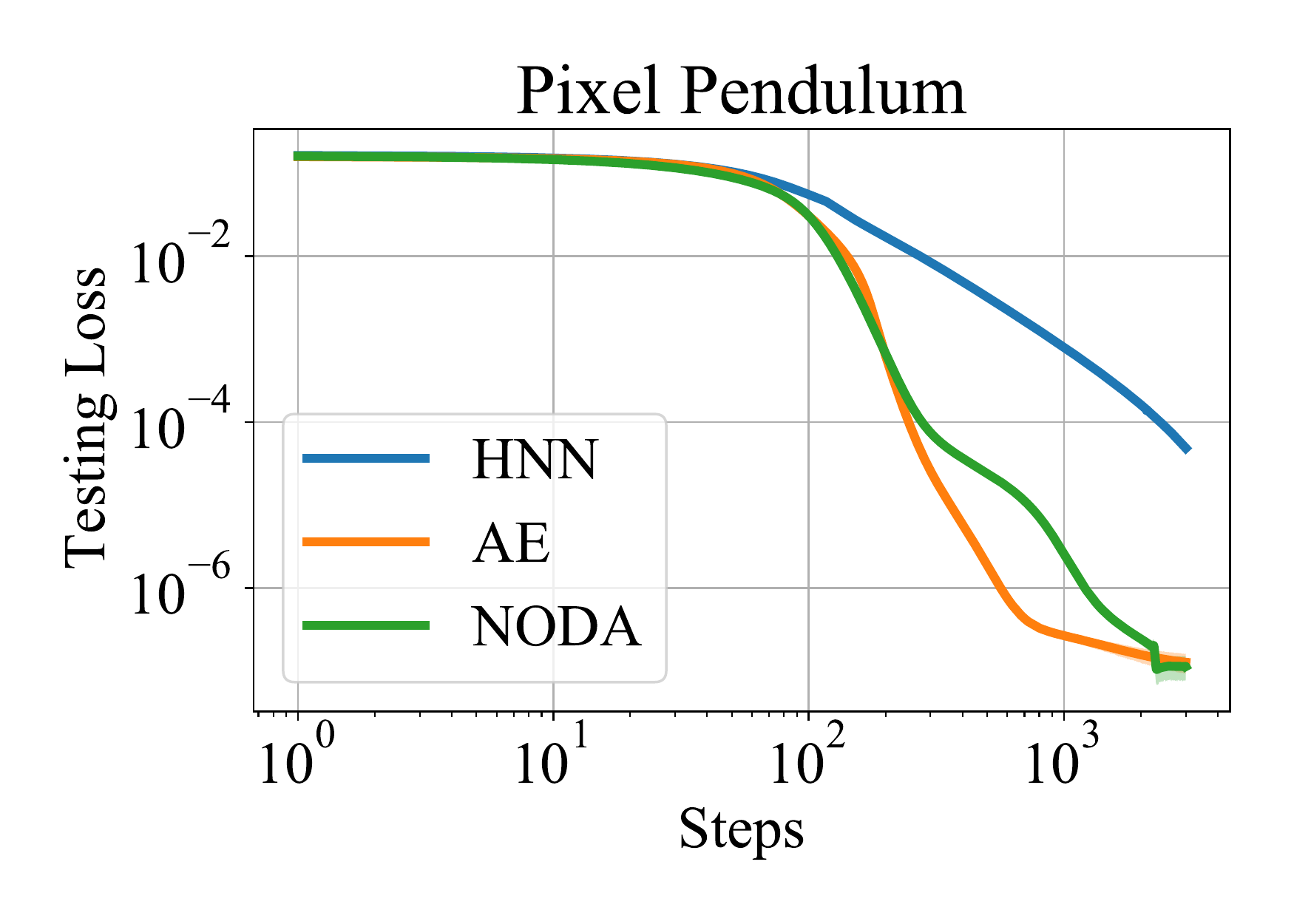}}
\subfloat[]{
\includegraphics[width=0.25\linewidth]{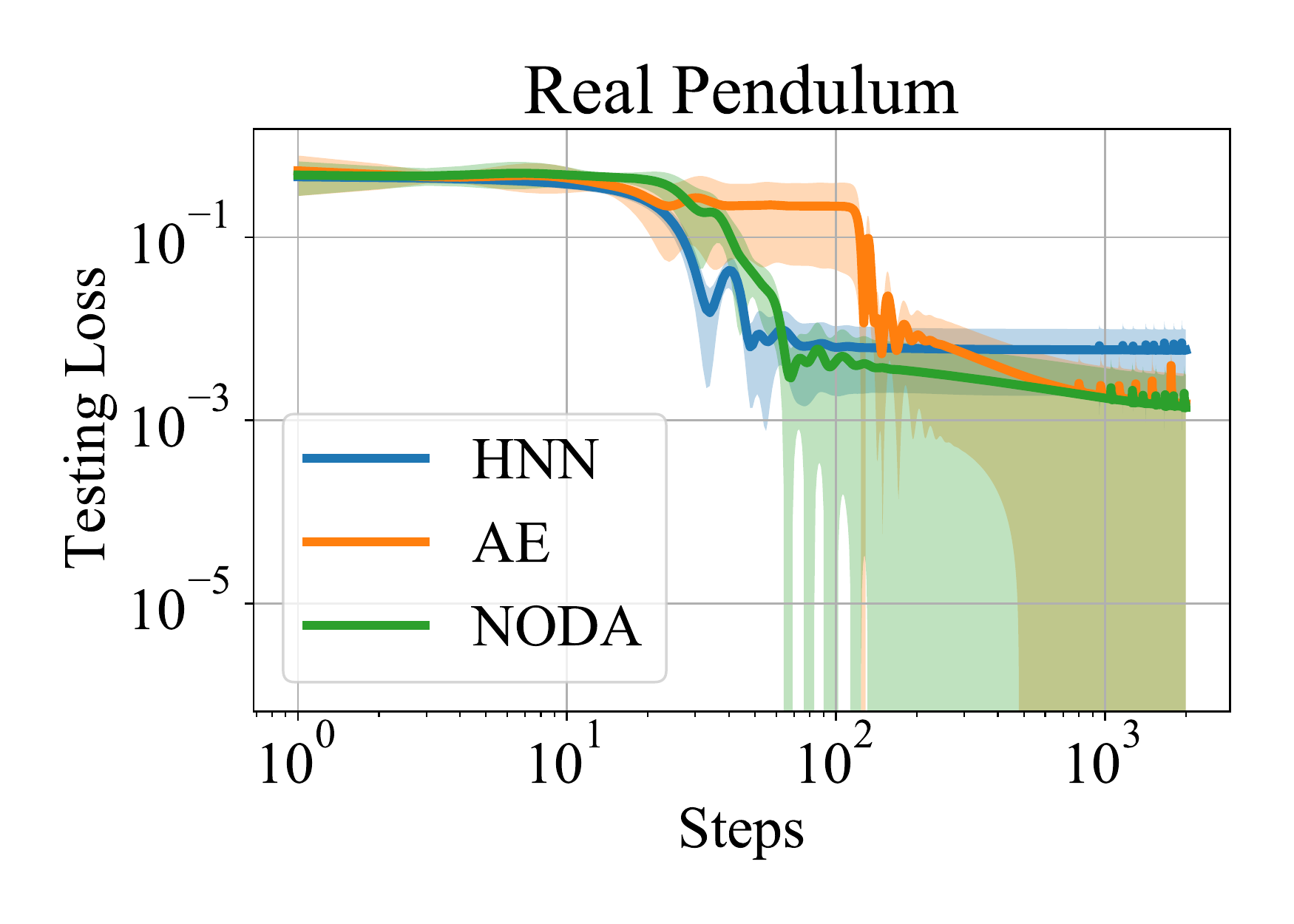}}
\subfloat[]{
\includegraphics[width=0.25\linewidth]{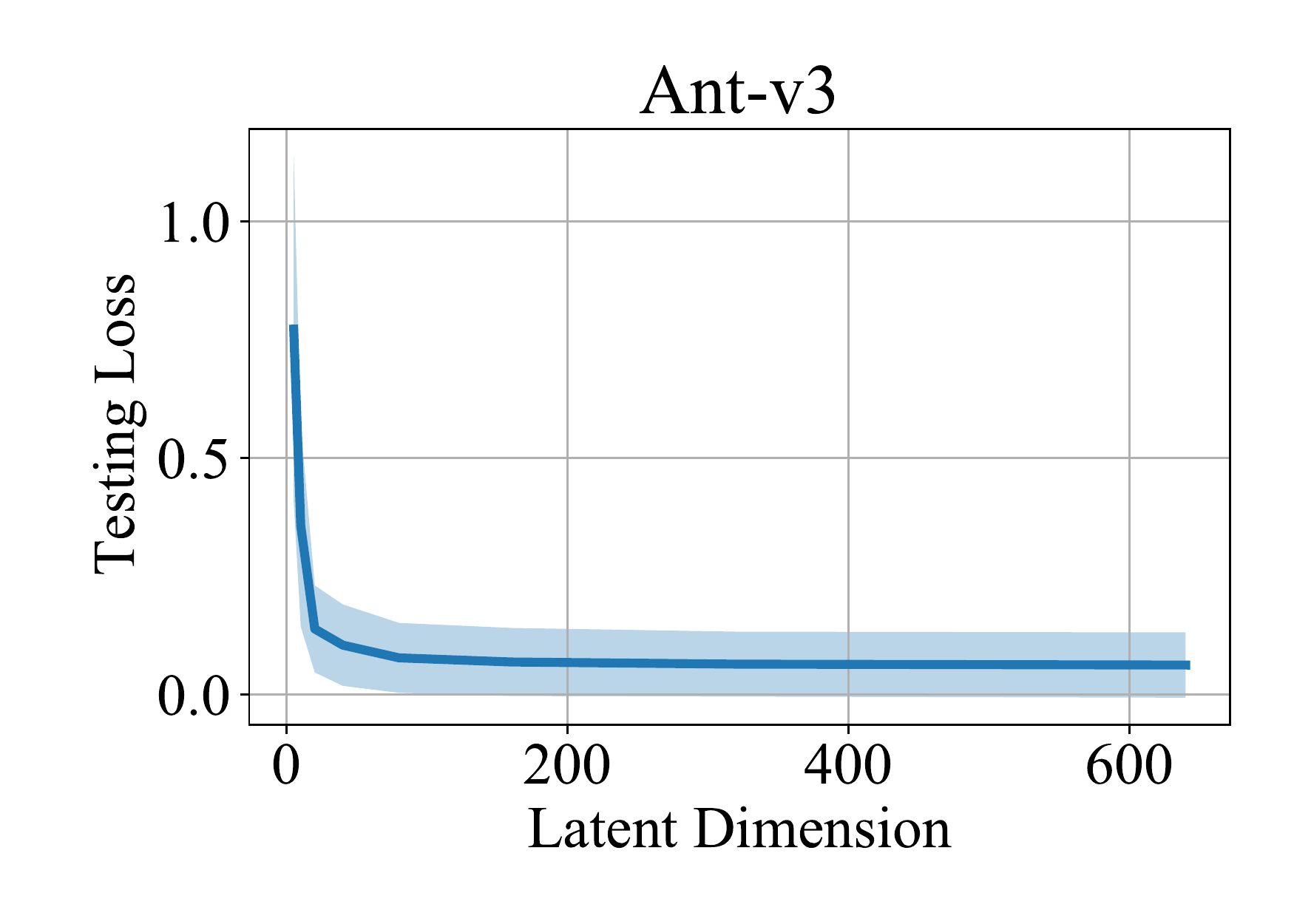}}
\subfloat[]{
\includegraphics[width=0.25\linewidth]{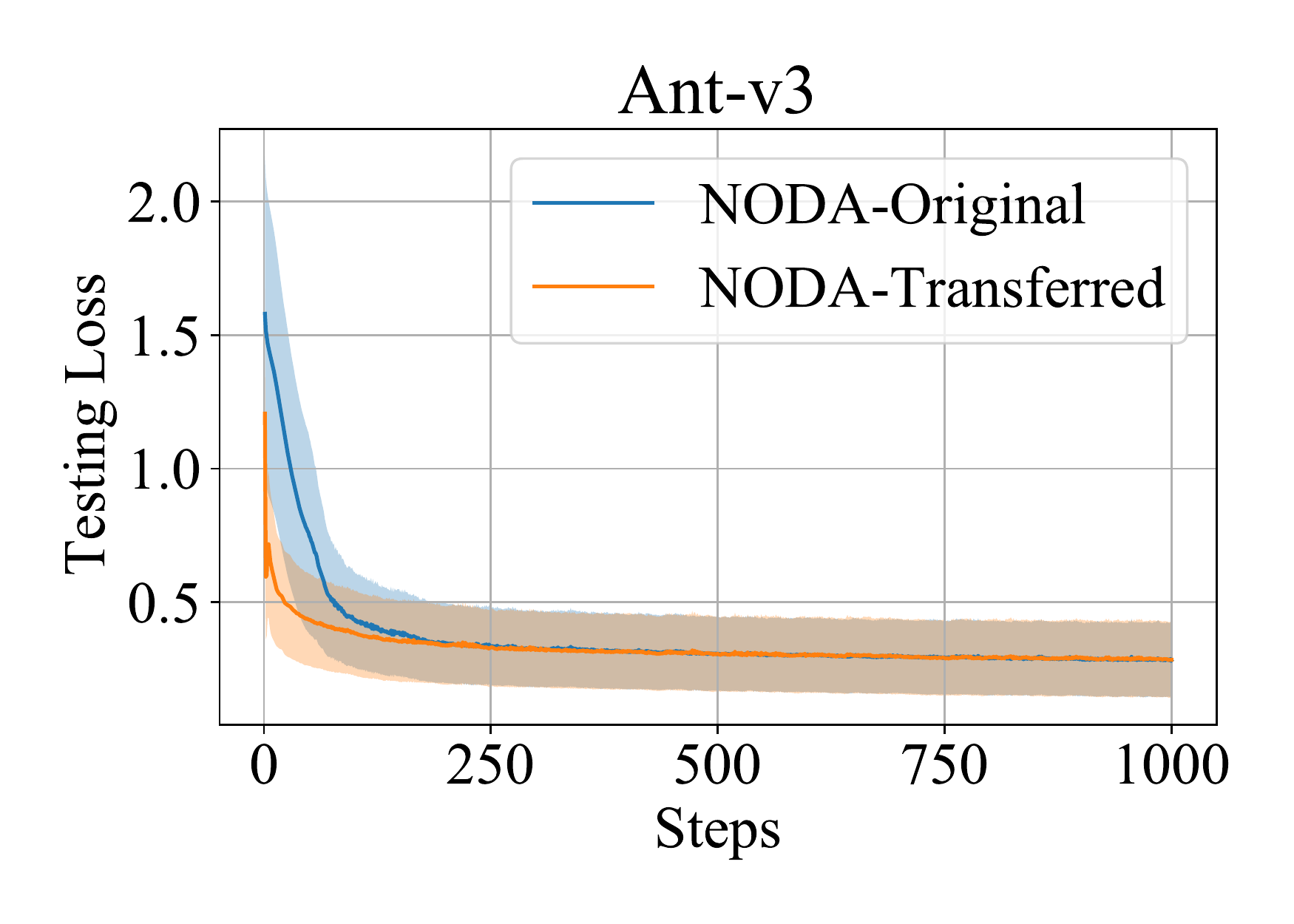}}
\caption{(a, b): Results of NODA and other physical simulators in modeling dynamic systems. (c): Sensitivity of NODA's latent dimension when learning the transition in the Ant-v3 task. (d):  Transferability of NODA in the Ant-v3 task.}
\vspace{-1.5ex}
\label{fig: simulation effectiveness and sensitivity}
\end{figure}
\begin{figure}[b]
\centering
\includegraphics[width=\linewidth, trim=250 0 250 0, clip]{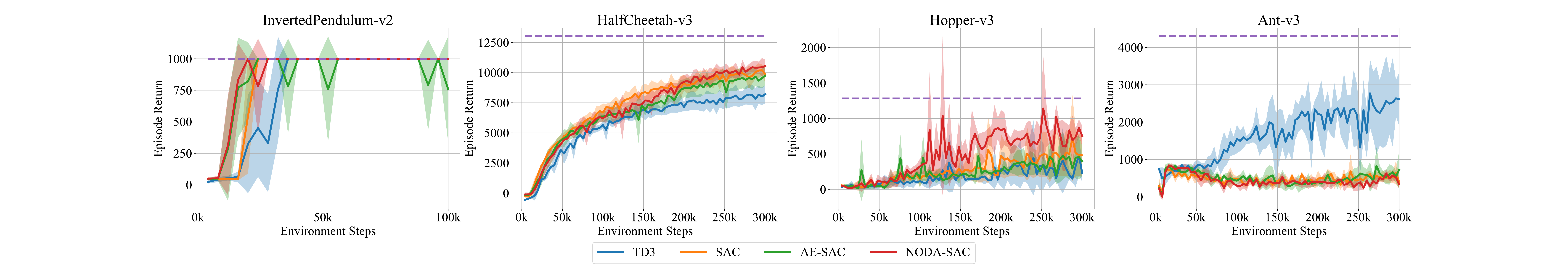}
\caption{Results of TD3, SAC, AE-SAC and NODA-SAC on MuJoCo environments in Gym. The models are evaluated every 4k steps for 10 episodes, and the means and standard deviations are computed across 10 episodes across 4 seeds. The dashed lines are the best asymptotic performances of 1M steps.
}
\vspace{-1.5ex}
\label{fig: training process of sample Efficiency part}
\end{figure}
Figure~\ref{fig: simulation effectiveness and sensitivity} (c) shows the effect of the latent dimension on NODA's testing error after 3,000 iterations. Here we use NODA to learn the transition part in the Ant-v3 task. By mechanics, we know that the dimension of Hamiltonian canonical equations (corresponding to the dimension of the latent space in NODA) is approximately 40 in this task. We can see that a latent dimension of 40 is sufficient for NODA to achieve a high performance. This suggests that if we know the approximate dimension, we can use the prior knowledge to get a sweet point.

Figure~\ref{fig: simulation effectiveness and sensitivity} (d) shows the testing loss of the original NODA and the transferred NODA in the Ant-v3 task. We train the latter model for 100 steps over the one step transition of Ant-v3, and the final task is to predict the two-step transition. The figure suggests that we can transfer NODA over similar environments to accelerate training. 

\subsection{Sample Efficiency}
\label{sec: sample efficiency}
\begin{table}[t]
    \caption{Experiments in MuJoCo environments (200k steps, 4 seeds)}
    \label{tab: Experiments in MuJoCo environments}
    \centering
    \begin{tabular}{cccccc}
    \toprule
        \multicolumn{1}{c}{Algorithms}  & \multicolumn{4}{c}{Environments} \\
    \midrule
    MFRL Algorithms  & InvertedPendulum & HalfCheetah &  Hopper & Ant\\
    \midrule
    TD3   &       \textbf{1000.0±0.0} & 7154.8±364.9 &  283.1±245.8  & \textbf{2218.8±539.4}\\
    SAC   &       \textbf{1000.0±0.0} & 9150.5±642.9 &  476.9±297.6  & 492.3±209.4\\
    AE-SAC &    \textbf{1000.0±0.0} & 8703.0±738.4  & 255.6±34.4 & 400.5±112.2\\
    NODA-SAC &    \textbf{1000.0±0.0} & \textbf{9241.8±567.7} &  \textbf{832.5±283.4} & 402.4±54.9\\
    \midrule
    MBRL Algorithms  & Hopper-Stand & Walker-Walk & Finger-Spin &  Walker-Run	
\\
    \midrule
    Dreamer &   198.0±254.3    &   642.9±58.5    &   366.2±252.5    &  \textbf{229.1±11.4}	
\\
    BIRD &   236.4±138.2    &   601.6±37.9    &   361.7±246.8    &  216.0 ±23.4	
\\
    NODA-Dreamer &   \textbf{260.2±284.9}    &   \textbf{764.9±146.9}    &    \textbf{428.5±96.3}   &  224.2±34.7	
\\
    \bottomrule
    \end{tabular}%
\end{table}

\begin{figure}[ht]
\centering
\includegraphics[width=\linewidth]{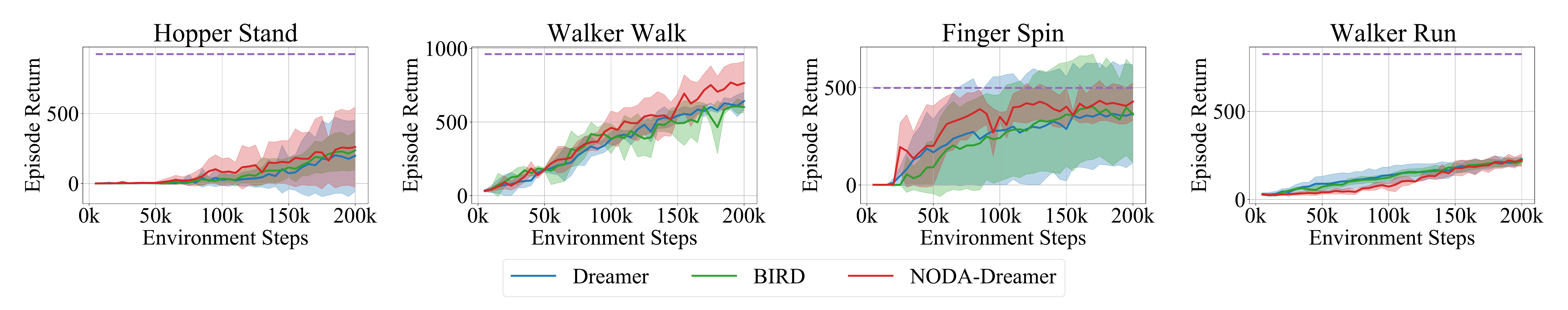}
\caption{Results of Dreamer, BIRD and NODA-Dreamer on DMC. The models are evaluated every 5k steps for 10 episodes, and the means and standard deviations are computed across 10 episodes across 4 seeds. The dashed lines are the asymptotic performances of 5M steps of Dreamer mentioned in ~\citep{hafner2019dream}.}
\vspace{-1.5ex}
\label{fig: dreamer results plots}
\end{figure}
The episode returns achieved by MFRL methods and MBRL methds (with/without NODA) are shown in Table~\ref{tab: Experiments in MuJoCo environments}, Figure~\ref{fig: training process of sample Efficiency part} and Figure~\ref{fig: dreamer results plots}. For MFRL methods, NODA-SAC achieves the best performance with good robustness (see the InvertedPendulum-v2 task in Figure~\ref{fig: training process of sample Efficiency part}). This means that NODA can enhance the performance of the state-of-the-art algorithm SAC by imaginary trajectories. Meanwhile, if we substitute the ODE part using an MLP (denoted as the AE-SAC model), the performance will fall. This suggests that introducing an ODE network, i.e., the prior knowledge of the existence of Hamiltonian indeed helps. But for environments that SAC do not perform well (such as the Ant-v3 task), the enhancement of NODA is not significant. For MBRL methods, NODA-Dreamer can also achieve a higher return than BIRD and Dreamer in many environments. In the task 'Walker Run', during the first half of training, NODA-Dreamer lags far behind Dreamer, but it quickly catches up in the second half. This suggests that NODA has the power to accelerate training after some warm-up steps. Generally, NODA can enhance the performance for both MFRL and MBRL methods. More analyses and experiments can be found in Appendix D.

\section{Conclusion}
This paper proposes a novel simulator called NODA for reinforcement learning. Motivated by Hamiltonian canonical equations in physics, NODA has clear physical meanings. This allows us to incorporate prior knowledge or do transfer learning, which can improve sample efficiency. 
In the theoretical part, we fill the gap between former theorems about Lipschitz models and the physical world by proving that dynamic systems can be Lipschitz and extending former limited theorems to continuous action spaces. Besides, we give uniform transition error bounds and value error bounds for NODA. In the experimental part, we verify that NODA provides a more efficient modeling than HNN, and we can use prior knowledge or transfer learning to further boost its training. Not only is NODA itself sample-efficient, but NODA can improve the sample efficiency of both MFRL and MBRL methods such as SAC and Dreamer. 

\section{Ethics Statement}
In this paper, we only use data sets that we are permitted to use, and we present our method and results in a transparent, honest and reproducible way. We fully acknowledge any contributions to this work. The possible harm that our method can bring originates from the modeling errors of NODA, which can lead the agent to go wrong. However, this can be mitigated by theoretical guarantees as well as human supervision. Generally, we believe that our work will make reinforcement learning more accessible by reducing the number of training samples that may be costly to acquire.

\section{Reproducibility Statement}
We give formal proofs for our theorems in Appendix B. Our code is provided in supplementary material. The NODA folder contains experiments about simulation effectiveness and experiments in Gym environments, and the NODA-Dreamer folder contains experiments in DMC environments.

\bibliography{main}
\bibliographystyle{iclr2022_conference}

\appendix
\section{Hamiltonian Canonical Equations: Pendulum-v0 as an Example}
The settings of the Pendulum-v0 task can be found on GitHub\footnote{\url{https://github.com/openai/gym/blob/master/gym/envs/classic_control/pendulum.py}}. 

Let $x$ be the axis horizontally to the right, $y$ be the axis stright up. The pendulum's position can be described by generalized position $\theta$, where $\theta$ is the angle from the $y$ axis to the pendulum.

For the free end, $x = -l\sin\theta$, $y = l\cos\theta$. For a force perpendicular to the pendulum (the angle from the $y$ axis to $u$ is $\theta+\pi/2$) with its 2-norm equaling $u$, its generalized force $Q=\frac{\partial r}{\partial x}u_x + \frac{\partial r}{\partial y}u_y = -l\cos\theta\cdot(-u\cos\theta)-l\sin\theta\cdot(-u\sin\theta)=ul$, which is just the torque of the force.

The kinetic energy $T=1/6ml^2(\dif \theta / \dif t)^2$, and the potential energy $V=1/2mgl\cos\theta$, so we get the Lagrangian in Equation~\eref{eqap: Lagrangian}.
\begin{equation}
\mathcal{L}\left(\theta, \frac{\dif \theta}{\dif t}, t\right) = T-V = \frac 1 6ml^2\left(\frac{\dif \theta}{\dif t}\right)^2 - \frac 12mgl\cos\theta
\label{eqap: Lagrangian}
\end{equation}
We denote $\theta$ as $q$, the canonical momentum for $\theta$ as $p$, then
\begin{equation}
p = \frac{\partial\mathcal{L}}{\partial\left(\frac{\dif q}{\dif t}\right)} = \frac 1 3ml^2\frac{\dif q}{\dif t}
\label{eqap: Canonical momentum}
\end{equation}
Then we can write the Hamiltonian in Equation~\eref{eqap: Hamiltonian}.
\begin{equation}
\mathcal{H}(q, p, t) = \frac{\dif q}{\dif t}p - \mathcal{L} = \frac{3p^2}{2ml^2} + \frac 12mgl\cos q
\label{eqap: Hamiltonian}
\end{equation}
Then Hamiltonian canonical equations give out the dynamics, as is shown in Equation~\eref{eqap: Hamiltonian canonical equations}.
\begin{equation}
    \left\{
    \begin{aligned}
        \frac{\dif q}{\dif t} &= \frac{\partial\mathcal{H}}{\partial p}=\frac{3p}{ml^2}\\
        \frac{\dif p}{\dif t} &= -\frac{\partial\mathcal{H}}{\partial q} + Q(t)=\frac 12mgl\sin q+u(t)
    \end{aligned}
    \right.
    \label{eqap: Hamiltonian canonical equations}
\end{equation}
This is the general case, but the code implementation is a little different. In the code, $u(t)$ is clipped to $[-2, 2]$, $\theta$ is clipped to $[-\pi, \pi)$, and $\dif \theta/\dif t$ is clipped to $[-8, 8]$. This exactly gives the bounds of the state space, since states are $[\cos\theta, \sin\theta, \dif \theta/\dif t]$. The clips can be added to Hamiltonian canonical equations, so the code implementation can be viewed as a solver for Equation~\eref{eqap: Modified Hamiltonian canonical equations}.
\begin{equation}
    \left\{
    \begin{aligned}
        \frac{\dif q}{\dif t} &=\frac{3p}{ml^2}\\
        \frac{\dif p}{\dif t} &=
        \left\{
        \begin{aligned}
        &\min(\frac 12mgl\sin q+\text{clip}(u(t), -2, 2), 0), \text{ if } p>\frac{8}{3ml^2}\\
        &\max(\frac 12mgl\sin q+\text{clip}(u(t), -2, 2), 0), \text{ if } p<-\frac{8}{3ml^2}\\
        &\frac 12mgl\sin q+\text{clip}(u(t), -2, 2), \text{ otherwise}
        \end{aligned}
        \right.
    \end{aligned}
    \right.
    \label{eqap: Modified Hamiltonian canonical equations}
\end{equation}
The encoder $f$ is a function from $[\cos\theta, \sin\theta, \dif \theta/\dif t]$ to $[\theta, 1/3 ml^2\cdot\dif \theta/\dif t]$, which is invertible and Lipschitz. Similarly, the decoder $g$ is Lipschitz. Besides, the right hand side of Hamiltonian canonical equations is continuous and bounded. 

\section{Proofs}
\begin{theorem}
\textbf{(Lipschitz Dynamic Systems)} For a dynamic system with a $C^2$ continuous Hamiltonian $\mathcal{H}: \mathbb{R}^{2K+1}\rightarrow\mathbb{R}$, if the state $\bm{s}$ is in a bounded closed set $\mathcal{S}\subset \mathbb{R}^{l}$, the evolution time equals $\tau$, the generalized force $Q_k$ is $C^1$ continuous with respect to states and bounded (for any dimension $k$), and the transformation from states to canonical states $f^*: \mathcal{S}\rightarrow\mathbb{R}^{2K}$ is Lipschitz, then the canonical states are Lipschitz with respect to time, and the environment with respect to canonical states is Lipschitz. Additionally, if the transformation from canonical states to states $g^*: \mathbb{R}^{2K}\rightarrow\mathcal{S}$ is Lipschitz, then the environment is Lipschitz, which means (here $\bm s\neq \bm s'$)
\begin{equation}
\sup_{\bm a\in \mathcal{A}}\sup _{\bm s, \bm s' \in \mathcal{S}} \frac{d_{\mathcal{S}}\left(\bm s_{\text{next}}, \bm s_{\text{next}}'\right)}{d_{\mathcal{S}}\left(\bm s, \bm s'\right)} < \infty.
\label{eqap: Dynamic systems as a Lipschitz model}
\end{equation}
\label{thap: Lipschitz dynamic systems}
\end{theorem}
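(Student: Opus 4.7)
The plan is to decompose the transition $\bm{s} \mapsto \bm{s}_{\text{next}}$ into a composition of three maps: (i) the encoder $f^*: \bm{s} \mapsto \bm{u}(0)$, (ii) the flow map $\Phi_{\tau, \bm{a}}: \bm{u}(0) \mapsto \bm{u}(\tau)$ obtained by integrating the Hamiltonian canonical equations \eref{eq: Hamiltonian canonical equations} over $[0, \tau]$ with action $\bm{a}$, and (iii) the decoder $g^*: \bm{u}(\tau) \mapsto \bm{s}_{\text{next}}$. Because $f^*$ and $g^*$ are assumed Lipschitz and a composition of Lipschitz maps is Lipschitz with constant equal to the product, the heart of the argument reduces to showing that $\Phi_{\tau, \bm{a}}$ is Lipschitz in its initial condition with a constant that is uniform in $\bm{a} \in \mathcal{A}$.

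First I would establish that the right-hand side $F(\bm{u}, \bm{a}, t) = (\partial\mathcal{H}/\partial \bm{p},\, -\partial\mathcal{H}/\partial \bm{q} + \bm{Q}(t))$ is Lipschitz in $\bm{u}$ with a constant independent of $(\bm{a}, t)$. Since $f^*$ is Lipschitz and $\mathcal{S}$ is bounded closed, $f^*(\mathcal{S})$ is bounded; combined with the uniform boundedness of $\bm{Q}$ and of the partials of $\mathcal{H}$, a standard a priori estimate confines the trajectory $\bm{u}(t)$ to a fixed compact set $\mathcal{K} \subset \mathbb{R}^{2K}$ for $t \in [0, \tau]$. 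On $\mathcal{K}$, the assumption that $\mathcal{H}$ is $C^2$ makes $\nabla_{\bm{q}}\mathcal{H}$ and $\nabla_{\bm{p}}\mathcal{H}$ continuously differentiable and therefore Lipschitz in $\bm{u}$; the $C^1$-and-bounded hypothesis on each $Q_k$ gives the analogous Lipschitz bound for $\bm{Q}$. Summing these contributions yields a uniform Lipschitz constant $L_F$ for $F$ in $\bm{u}$.

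Given $L_F$, I would apply Grönwall's inequality to two trajectories $\bm{u}_1(t), \bm{u}_2(t)$ driven by the same $\bm{a}$ but with distinct initial conditions: subtracting the integrated equations and taking norms gives
\begin{equation}
\|\bm{u}_1(t) - \bm{u}_2(t)\| \le \|\bm{u}_1(0) - \bm{u}_2(0)\| + L_F \int_0^t \|\bm{u}_1(s) - \bm{u}_2(s)\|\, \dif s,
\end{equation}
so that $\|\bm{u}_1(\tau) - \bm{u}_2(\tau)\| \le e^{L_F \tau}\|\bm{u}_1(0) - \bm{u}_2(0)\|$, establishing Lipschitz continuity of $\Phi_{\tau, \bm{a}}$ with constant $e^{L_F \tau}$ uniformly in $\bm{a}$. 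Lipschitz continuity of $\bm{u}(t)$ in $t$ then drops out of the bound $\|\dif\bm{u}/\dif t\| \le M$ on $\mathcal{K}$, since $\|\bm{u}(t_1) - \bm{u}(t_2)\| \le M|t_1 - t_2|$. Precomposing with $f^*$ yields the claim that the environment is Lipschitz with respect to canonical states; postcomposing with $g^*$ delivers \eref{eq: Dynamic systems as a Lipschitz model}.

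The main obstacle I expect is securing genuine uniformity in $\bm{a}$: the Jacobian bounds for $\bm{Q}$ with respect to canonical states must not blow up as $\bm{a}$ ranges over $\mathcal{A}$, and one must verify that the invariant compact set $\mathcal{K}$ can be chosen independently of $\bm{a}$, so that $L_F$ is a single scalar rather than an $\bm{a}$-dependent quantity. Under the stated uniform boundedness of $\bm{Q}$ together with the compactness of $\mathcal{S}$ and the finite horizon $\tau$, a Grönwall-type estimate on $\|\bm{u}(t)\|$ itself should pin down such a $\mathcal{K}$ and thereby close the argument.
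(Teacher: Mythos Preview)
Your proposal is correct and follows essentially the same route as the paper: decompose the transition as $g^* \circ \Phi_{\tau,\bm a} \circ f^*$, use compactness of the canonical-state region together with $C^2$-smoothness of $\mathcal{H}$ and the $C^1$/boundedness of $\bm Q$ to obtain a uniform Lipschitz constant for the ODE vector field, then apply Gr\"onwall to get the exponential bound $e^{L_F\tau}$ on the flow, with Lipschitzness in $t$ coming from the bounded RHS. The only noteworthy difference is that you explicitly flag the need for an a~priori compact set $\mathcal{K}$ containing the full trajectory (not just the initial data $f^*(\mathcal{S})$), whereas the paper tacitly treats the canonical states as bounded throughout; your caution there is well placed but does not change the structure of the argument.
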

\begin{proof}
We know that $\bm u = f^*(\bm s)$, where $\bm u$ is the concatenation of $\bm q$ and $\bm p$. Because $\bm s$ is in a bounded closed set, and function $f^*$ is continuous (a Lipschitz function is continuous), each dimension of $f^*$ has a maximum value and a minimum value. So  each dimension of $f^*$ is bounded, which means each dimension of $\bm q$ and $\bm p$ is bounded.

Now we are going to look into Hamiltonian canonical equations. Because $\mathcal{H}$ is $C^2$ continuous, $\frac{\partial\mathcal{H}}{\partial p_k}$ and  $-\frac{\partial\mathcal{H}}{\partial q_k}$ are $C^1$ continuous (for each $k\in\{1,\cdots,K\}$). Because $q_k$ and $p_k$ are bounded for all the values of $k$, the concatenation of $\bm p$, $\bm q$ and $t$ lies in a bounded closed set. So $\frac{\partial\mathcal{H}}{\partial p_k}$ and $-\frac{\partial\mathcal{H}}{\partial q_k}$ have a maximum value and a minimum value, which means they are bounded. Note that $Q_k$ is also bounded, then the right hand side of Hamiltonian canonical equations is bounded.

We denote the bound of $\left|\frac{\partial\mathcal{H}}{\partial p_k}\right|$ as $M_k$, then we can give the Lipschitz constant for $q_k$:
\begin{equation}
|q_k(t_2) - q_k(t_1)| = \left|\frac{\partial\mathcal{H}}{\partial p_k}\bigg|_{t=\xi}(t_2 - t_1)\right| \leq M_k|t_2 - t_1|.
\label{eqap: Bound for q_k}
\end{equation}
Here $\xi$ is a value between $t_1$ and $t_2$.

Similarly, we denote the bound of $\left|-\frac{\partial\mathcal{H}}{\partial p_k} + Q_k\right|$ as $M_k'$, then we can give the Lipschitz constant for $p_k$:
\begin{equation}
|p_k(t_2) - p_k(t_1)| = \left|\left(-\frac{\partial\mathcal{H}}{\partial p_k} + Q_k\right)\bigg|_{t=\xi}(t_2 - t_1)\right| \leq M_k'|t_2 - t_1|.
\label{eqap: Bound for p_k}
\end{equation}
From Equation~\eref{eqap: Bound for q_k} and Equation~\eref{eqap: Bound for p_k}, we know that the canonical states are Lipschitz with respect to time. Let the right hand side of Hamiltonian canonical equations be $\bm I(\bm u, t)$. Since $\dif \bm I(\bm u, t)/\dif \bm u$ is continuous, and $\bm u$ and $t$ lie in a bounded closed set, $\dif \bm I(\bm u, t)/\dif \bm u$ is bounded. Then for different initial canonical states $\bm u_{t_0}$ and $\bm u_{t_0'}'$, we have (here the absolute value and the supremum for a vector/matrix is taken over each element of a vector/matrix, and the $\leq$ between two vectors means the relation between each element):
\begin{equation}
\begin{aligned}
\left|\frac{\dif (\bm u_{t_0}(t) - \bm u_{t_0'}'(t))}{\dif t}\right| &= |\bm I(u_{t_0}(t), t) - \bm I(u_{t_0'}'(t), t)|\\
&\leq\sup_{\bm u, t}\left|\frac{\dif \bm I(\bm u, t)}{\dif \bm u}\right||u_{t_0}(t)-u_{t_0'}'(t)|\\
&\leq \left|\left|\sup_{\bm u, t}\left|\frac{\dif \bm I(\bm u, t)}{\dif \bm u}\right|\right|\right|_\infty|u_{t_0}(t)-u_{t_0'}'(t)|.\\
\end{aligned}
\label{eqap: Bound for evolution of canonical states}
\end{equation}
So 
\begin{equation}
d_\infty(\bm u_{t_0}(t+\tau) , \bm u_{t_0'}'(t+\tau)) \leq \exp\left\{\tau\left|\left|\sup_{\bm u, t}\left|\frac{\dif \bm I(\bm u, t)}{\dif \bm u}\right|\right|\right|_\infty\right\}\cdot d_\infty(\bm u_{t_0}, \bm u_{t_0'}').
\label{eqap: Bound for u (dynamic system)}
\end{equation}
Since $\frac{\dif \bm I(\bm u, t)}{\dif \bm u}$ is bounded, $\exp\left\{\tau\left|\left|\sup_{\bm u, t}\left|\frac{\dif \bm I(\bm u, t)}{\dif \bm u}\right|\right|\right|_\infty\right\}<\infty$. This means the canonical states are Lipschitz with respect to initial canonical states. Because the composition of two Lipschitz functions is Lipschitz and $f^*$ is Lipschitz, we know that the environment is Lipschitz with respect to canonical states.

If we further assume that $g^*$ is Lipschitz, we know the environment is Lipschitz uniformly for all the action $\bm a$ with  respect to states (the composition of two Lipschitz functions is Lipschitz), which is just Equation~\eref{eqap: Dynamic systems as a Lipschitz model}. This concludes the proof.
\end{proof}

\begin{theorem}
\textbf{(Lipschitz NODA)} For the NODA model, if the state $\bm{s}$ is in a bounded closed set $\mathcal{S}\subset \mathbb{R}^{l}$, $f: \mathcal{S}\rightarrow\mathbb{R}^{2K}$ is Lipschitz, the evolution time equals $\tau$, the action $a_m$ is $C^1$ continuous with respect to states and bounded (for any dimension $m$), function $h$ is $C^1$ continuous, then canonical states are Lipschitz with respect to time, and NODA with respect to canonical states is Lipschitz. Additionally, if the transformation from canonical states to states $g: \mathbb{R}^{2K}\rightarrow\mathcal{S}$ is Lipschitz, then NODA is Lipschitz, which means (here $\bm s\neq \bm s'$)
\begin{equation}
\sup_{\bm a\in \mathcal{A}}\sup _{\bm s, \bm s' \in \mathcal{S}} \frac{d_{\mathcal{S}}\left(\bm s_{\text{next}}, \bm s_{\text{next}}'\right)}{d_{\mathcal{S}}\left(\bm s, \bm s'\right)} < \infty.
\label{eqap: NODAs as a Lipschitz model}
\end{equation}
\label{thap: Lipschitz NODA}
\end{theorem}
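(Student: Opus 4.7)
The plan is to mirror the proof of Theorem~\ref{thap: Lipschitz dynamic systems} essentially line-for-line, with the neural ODE in~\eref{eq: NODE} playing the role of the Hamiltonian canonical equations. The starting observation is that, since $f:\mathcal{S}\to\mathbb{R}^{2K}$ is Lipschitz and $\mathcal{S}$ is a bounded closed set, the image $f(\mathcal{S})$ is contained in a compact subset of $\mathbb{R}^{2K}$. Hence the latent trajectory $\bm u(t)$ starting from $\bm u(t_0)=f(\bm s)$ is initialized in a compact set. Combining this with the hypothesis that each component $a_m$ is $C^1$ in states and bounded, the triple $(\bm u,\bm a,t)$ over the evolution window $[t_0,t_0+\tau]$ lies in a compact set $\mathcal{U}\times\mathcal{A}_0\times[t_0,t_0+\tau]$; because $h$ is $C^1$ it is automatically bounded there by some constant $M$.

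Next I would establish the two intermediate Lipschitz claims. For Lipschitzness of $\bm u$ in time, the mean value theorem on each component of $\int h\,\mathrm{d}t$ immediately yields
\begin{equation}
|\bm u(t_2)-\bm u(t_1)|\le M\,|t_2-t_1|,
\end{equation}
exactly as in~\eref{eqap: Bound for q_k}--\eref{eqap: Bound for p_k}. For Lipschitzness of the ODE flow in the initial canonical state, I would exploit that $\partial h/\partial \bm u$ is continuous (since $h$ is $C^1$) and hence bounded on the compact domain by some $L$. Then for two solutions $\bm u_{t_0}(t)$ and $\bm u_{t_0'}'(t)$ (driven by the same action signal, since the bound must be uniform in $\bm a\in\mathcal{A}$) I would mimic~\eref{eqap: Bound for evolution of canonical states} and obtain
\begin{equation}
\Bigl|\tfrac{\mathrm{d}}{\mathrm{d}t}\bigl(\bm u_{t_0}(t)-\bm u_{t_0'}'(t)\bigr)\Bigr|\;\le\; L\,\bigl|\bm u_{t_0}(t)-\bm u_{t_0'}'(t)\bigr|,
\end{equation}
from which Gr\"onwall's inequality gives $d_\infty(\bm u_{t_0}(t_0+\tau),\bm u_{t_0'}'(t_0+\tau))\le e^{L\tau}\,d_\infty(\bm u_{t_0},\bm u_{t_0'}')$. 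This is the analogue of~\eref{eqap: Bound for u (dynamic system)} and shows that the latent flow $\mathrm{ODE}(h,\cdot,\bm a,t_0,t_0+\tau)$ is Lipschitz with constant $e^{L\tau}$, \emph{uniformly in} $\bm a\in\mathcal{A}$. Composing this with the Lipschitz map $f$ from the left shows that NODA with respect to canonical states is Lipschitz.

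For the last claim, I would simply compose with the Lipschitz decoder $g$ on the outside: since a composition of three Lipschitz maps $g\circ\mathrm{ODE}(h,\cdot)\circ f$ is Lipschitz, and the Lipschitz constants of $f$, the ODE flow, and $g$ are all finite and independent of $\bm a$, the ratio in~\eref{eqap: NODAs as a Lipschitz model} is uniformly bounded, concluding the proof.

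The only real obstacle is the uniformity in $\bm a$. It must be checked that the compact domain on which we obtain bounds on $h$ and $\partial h/\partial \bm u$ can be chosen independently of the particular action signal, which follows from the assumed boundedness of each $a_m$ and the resulting compactness of $\mathcal{A}_0$. Once this is pinned down, the rest is a direct translation of the argument in Theorem~\ref{thap: Lipschitz dynamic systems}, with $h$ replacing the Hamiltonian vector field and no generalized forces needing a separate treatment.
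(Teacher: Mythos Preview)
Your proposal is correct and follows essentially the same route as the paper's proof, which explicitly says it mirrors Theorem~\ref{thap: Lipschitz dynamic systems}: compactness of $f(\mathcal{S})$ gives boundedness of the latent states, $C^1$-continuity of $h$ on the resulting compact domain gives the bound $W_k$ (your $M$) for Lipschitzness in time and the bound on $\partial h/\partial\bm u$ (your $L$) for the Gr\"onwall step, and then one composes with $f$ and $g$. Your explicit attention to the uniformity in $\bm a$ is a welcome addition that the paper leaves implicit.
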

\begin{proof}
The proof is similar to the proof of Theorem~\ref{thap: Lipschitz dynamic systems}, and here we just give a sketch. We know $\bm u = f(\bm s)$, where $\bm s$ is in a bounded closed set. Because function $f$ is continuous, each dimension of $\bm q$ and $\bm p$ is bounded.

Because $q_k$ and $p_k$ are bounded for all the values of $k$, the concatenation of $\bm q$, $\bm p$, $t$ and $\bm a$ lies in a bounded closed set. As a result, the output of the continuous function $h$ is bounded. We denote the bound of the $k$th output of the right hand side of the ODE as $W_k$. 
\begin{equation}
|u_k(t_2) - u_k(t_1)| \leq W_k|t_2 - t_1|
\label{eqap: Bound for u_k}
\end{equation}
Equation~\eref{eqap: Bound for u_k} tells us that $\bm{u}$ is Lipschitz with respect to time $t$. 

After that, we can just follow the corresponding part (the part after Equation~\eref{eqap: Bound for p_k}) in Theorem~\ref{thap: Lipschitz dynamic systems}'s proof to get final results. This concludes the proof.
\end{proof}

With this bound of multi-step transition, we can give bounds for state values. These bounds tell us that under certain conditions, the optimal policy learned from the simulator and the optimal policy learned from the environment do not differ much. 

\begin{theorem}
\textbf{(Transition Error Bounds)} Under the conditions in Theorem~\ref{thap: Lipschitz dynamic systems} and Theorem~\ref{thap: Lipschitz NODA}, we already know that the transition function $T_{\mathcal G}\left(\bm s^{\prime} \mid \bm s, \bm a\right)$ of the environment and the transition function $\widehat T_{\mathcal G}\left(\bm s^{\prime} \mid \bm s, \bm a\right)$ of the NODA model is Lipschitz. We denote the Lipschitz constant of these transition functions as $K_1$ and $K_2$. Let $\bar{K} = \text{min}\{K_1, K_2\}$, then $\forall n\geq 1$:
\begin{equation}
\delta(n):=W\left(\widehat T_{\mathcal{G}}^{n}(\cdot \mid \mu), T_{\mathcal{G}}^{n}(\cdot \mid \mu)\right) \leq \Delta \sum_{i=0}^{n-1}(\bar{K})^{i}.
\label{eqqp: Transition n-step bounds}
\end{equation}
Here $\Delta$ is defined as the upper bound of Wasserstein metric between $\widehat T\left(\cdot \mid \bm s, \bm a\right)$ and $T\left(\cdot \mid \bm s, \bm a\right)$.
\label{thap: Transition error bounds}
\end{theorem}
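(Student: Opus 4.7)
The plan is to prove the bound by induction on $n$, mirroring the single-action-sequence argument of \citet{asadi2018lipschitz} but ensuring every intermediate estimate is uniform in the action. Two ingredients are needed before the induction runs. First, for any distribution $z$ and any action $\bm a$, the one-step discrepancy satisfies
\begin{equation}
W\bigl(\widehat T_{\mathcal G}(\cdot\mid z, \bm a),\, T_{\mathcal G}(\cdot\mid z, \bm a)\bigr) \le \int W\bigl(\widehat T(\cdot\mid \bm s, \bm a),\, T(\cdot\mid \bm s, \bm a)\bigr)\, z(\bm s)\,d\bm s \le \Delta,
\end{equation}
using convexity of the Wasserstein metric together with the uniform definition of $\Delta$. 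Second, the Lipschitz hypothesis on, say, $T$ in Definition~\ref{def: Lipschitz models} lifts from point masses to distributions: for every action $\bm a$ and distributions $z_1, z_2$,
\begin{equation}
W\bigl(T_{\mathcal G}(\cdot\mid z_1, \bm a),\, T_{\mathcal G}(\cdot\mid z_2, \bm a)\bigr) \le K_1 \cdot W(z_1, z_2),
\end{equation}
and analogously for $\widehat T$ with constant $K_2$. This is the standard fact that pushforward under a Lipschitz kernel is Lipschitz in Wasserstein distance.

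With these in hand the induction is short. The base case $n=1$ is immediate from the first ingredient applied to $z = \mu$. For the inductive step, fix an arbitrary action sequence $\bm a_0, \dots, \bm a_n$ and split
\begin{equation}
\delta(n+1) \le W\bigl(\widehat T_{\mathcal G}(\cdot\mid \widehat T_{\mathcal G}^{n}(\cdot\mid\mu), \bm a_n),\, T_{\mathcal G}(\cdot\mid \widehat T_{\mathcal G}^{n}(\cdot\mid\mu), \bm a_n)\bigr) + W\bigl(T_{\mathcal G}(\cdot\mid \widehat T_{\mathcal G}^{n}(\cdot\mid\mu), \bm a_n),\, T_{\mathcal G}(\cdot\mid T_{\mathcal G}^{n}(\cdot\mid\mu), \bm a_n)\bigr).
\end{equation}
The first term is bounded by $\Delta$ using the one-step uniform estimate; the second by $K_1 \delta(n)$ using the lifted Lipschitz inequality. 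Symmetrically, decomposing through $\widehat T_{\mathcal G}(\cdot\mid T_{\mathcal G}^{n}(\cdot\mid\mu), \bm a_n)$ yields $\Delta + K_2\,\delta(n)$. Taking the smaller of the two gives $\delta(n+1) \le \Delta + \bar K\,\delta(n)$, which unrolls to the geometric sum $\Delta \sum_{i=0}^{n} \bar K^i$.

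The only subtlety, and the place where the present statement strictly extends the earlier result, is that the bound must hold uniformly in the action sequence. This is precisely why Definition~\ref{def: Lipschitz models} takes the supremum over $\bm a\in\mathcal{A}$ and why $\Delta$ is defined as a uniform one-step bound: in each step of the induction the constants $K_1, K_2, \Delta$ do not depend on the particular $\bm a_n$ used, so the right-hand side survives when one takes the supremum over action sequences on the left. The tightness claim for linear deterministic transitions is inherited directly from the original argument, since in that regime both the Lipschitz estimate and the convexity inequality become equalities.
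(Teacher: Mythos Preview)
Your proposal is correct and follows the same route as the paper: the paper's own proof is a one-paragraph pointer to the induction argument of \citet{asadi2018lipschitz}, noting only that because the Lipschitz constants and the one-step error $\Delta$ are taken uniformly over $\bm a\in\mathcal A$, the fixed-action-sequence bound automatically upgrades to a uniform one. You have spelled out exactly that induction---convexity of $W$ for the one-step error, lifted Lipschitz property for the recursive term, and the symmetric decomposition yielding $\bar K$---so the approaches coincide, yours simply being the unabridged version.
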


The original theorem~\citep{asadi2018lipschitz} gives a a bound for a fixed action sequence. However, here our definitions of Lipschitz environments and models are uniform for all actions. So, by using the original proof, we give a same error bound for all possible action sequences. Thus, we get a uniform error bound under the continuous action space. This concludes the proof. Now we are going to give bounds for state values.

\begin{theorem}
\textbf{(Value Error Bounds)} Under all the conditions in Theorem~\ref{thap: Transition error bounds}, if the reward function $R(\bm s)$ is (uniformly) Lipschitz, which means we can define
\begin{equation}
K_R:= \sup_{a\in \mathcal{A}} \sup _{\bm s_{1}, \bm s_{2} \in \mathcal{S}} \frac{\left|R\left(\bm s_{1}, \bm a\right)-R\left(\bm s_{2}, \bm a\right)\right|}{d_\mathcal{S}\left(\bm s_{1}, \bm s_{2}\right)} < \infty.
\label{eqap: Lipschitz reward function}
\end{equation}
If we define state values as
\begin{equation}
V_{T}(\bm s):=\sum_{n=0}^{\infty} \gamma^{n} \int T_{\mathcal{G}}^{n}\left(\bm s^{\prime} \mid \delta_{\bm s}\right) R\left(\bm s^{\prime}\right) d \bm s^{\prime},
\label{eqap: State values}
\end{equation}
where $\delta_{\bm s}$ means the probability that the state is $\bm s$ equals 1.
Then $\forall \bm s\in\mathcal{S}$ and $\bar{K}$ (defined in Theorem~\ref{thap: Transition error bounds})$\in[0, \frac{1}{\gamma}]$, we have
\begin{equation}
\left|V_{T}(\bm s)-V_{\widehat{T}}(\bm s)\right| \leq \frac{\gamma K_{R} \Delta}{(1-\gamma)(1-\gamma \bar{K})}.
\label{eqap: Value error bounds}
\end{equation}
\label{thap: Value error bounds}
\end{theorem}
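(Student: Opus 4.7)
The plan is to bound the value gap term-by-term in the defining series for $V_T$ and $V_{\widehat T}$, then sum. First I would write
\begin{equation*}
V_T(\bm s) - V_{\widehat T}(\bm s) = \sum_{n=0}^{\infty} \gamma^n \int R(\bm s')\,\bigl[T_{\mathcal G}^n(\bm s' \mid \delta_{\bm s}) - \widehat T_{\mathcal G}^n(\bm s' \mid \delta_{\bm s})\bigr]\,d\bm s',
\end{equation*}
and observe that the $n=0$ term vanishes because both inner distributions equal $\delta_{\bm s}$.

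For each $n \geq 1$, the key tool is the Kantorovich--Rubinstein duality: since $R$ is $K_R$-Lipschitz (uniformly in $\bm a$, by the stated hypothesis), we get
\begin{equation*}
\left|\int R(\bm s')\bigl[\mu(\bm s') - \nu(\bm s')\bigr]d\bm s'\right| \leq K_R\, W(\mu,\nu)
\end{equation*}
for any two distributions $\mu,\nu$ on $\mathcal S$. Applying this with $\mu = T_{\mathcal G}^n(\cdot\mid\delta_{\bm s})$ and $\nu = \widehat T_{\mathcal G}^n(\cdot\mid\delta_{\bm s})$, and then invoking the multi-step transition bound from Theorem~\ref{th: Transition error bounds}, each term is controlled by $\gamma^n K_R \,\delta(n) \leq \gamma^n K_R \Delta \sum_{i=0}^{n-1}\bar K^i$.

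The remaining work is the double-sum computation. I would swap the order of summation:
\begin{equation*}
\sum_{n=1}^{\infty}\gamma^n \sum_{i=0}^{n-1}\bar K^i = \sum_{i=0}^{\infty} \bar K^i \sum_{n=i+1}^{\infty}\gamma^n = \sum_{i=0}^{\infty} \bar K^i \cdot \frac{\gamma^{i+1}}{1-\gamma} = \frac{\gamma}{(1-\gamma)(1-\gamma\bar K)},
\end{equation*}
where the geometric series in $i$ converges precisely because the hypothesis $\bar K \in [0,1/\gamma)$ (I would treat the boundary case $\bar K = 1/\gamma$ as excluded, since otherwise the bound degenerates) makes $\gamma\bar K < 1$. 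Multiplying through by $K_R \Delta$ yields the claimed inequality.

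The main obstacle I anticipate is a clean justification of the Kantorovich--Rubinstein step: one needs the inner distributions to be genuine probability measures on the metric space $(\mathcal S, d_\mathcal S)$ and the integral of $R$ against them to be well-defined, which requires verifying that the bounded-closed assumption on $\mathcal S$ from Theorems~\ref{th: Lipschitz dynamic systems} and~\ref{th: Lipschitz NODA} transfers to the pushforward measures $T_{\mathcal G}^n(\cdot\mid\delta_{\bm s})$ and $\widehat T_{\mathcal G}^n(\cdot\mid\delta_{\bm s})$; everything else is a straightforward geometric-series manipulation. The absolute convergence of the original double series (needed to justify the interchange of summation) follows from the same $\gamma\bar K < 1$ condition, so the two delicate points can be handled in one stroke.
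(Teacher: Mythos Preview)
Your proposal is correct and is precisely the argument the paper relies on: the paper's own ``proof'' simply defers to the original proof in \citep{asadi2018lipschitz} and observes that, because the Lipschitz constants here are taken uniformly over $\mathcal A$, the same bound holds for every action sequence. You have written out in full the underlying Kantorovich--Rubinstein and double-geometric-series argument that the reference contains, including the correct observation that the endpoint $\bar K = 1/\gamma$ must really be excluded for the series to converge (the paper's closed-interval statement is a minor oversight).
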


The original theorem~\citep{asadi2018lipschitz} is for action space $\mathcal{A} = \{\bm a\}$, which means there is only one action. Besides, the original theorem assumes that the reward function only depends on state $\bm s$.

However, these strict limitations can be overcome by our uniformly Lipschitz models and requiring a uniformly Lipschitz reward function, as is shown in Equation~\eref{eqap: Lipschitz reward function}. As long as these conditions are satisfied, we can just follow the path of the original proof. Specifically, for any action sequence, Equation~\eref{eqap: Value error bounds} holds, so we get a uniform error bound. This concludes the proof.

Here we can find that a crucial condition in the proofs of Theorem \ref{thap: Transition error bounds} and Theorem~\ref{thap: Value error bounds} is that both the model and the environment is Lipschitz. In fact, Theorem \ref{thap: Lipschitz dynamic systems} and Theorem~\ref{thap: Lipschitz NODA} lay the foundations of uniform transition error bounds and value error bounds.

\section{Algorithm Details}
Algorithm~\ref{alg: NODA-SAC},~\ref{alg: NODA-SAC Interaction} and~\ref{alg: NODA-SAC Training} show how we combine SAC~\citep{haarnoja2018soft} with NODA. Generally, we use interactions with the environment to train a NODA model, and use it to generate imaginary trajectories to facilitate the training of SAC. We combine AE with SAC in the same way. For efficiency, we run 50 interactions with the real environment, and then update the agent and our model 50 times, which is also implemented by spinning up~\citep{SpinningUp2018}. We also reduce the number of imaginary training batches per epoch when interactions with the real environment is sufficient.

Combining NODA with MBRL methods is easier. For Dreamer~\citep{hafner2019dream}, it itself works in an auto-encoder's fashion, that is, it encodes observations and actions into compact latent states and makes decisions and predictions in the latent state space using its world model, after which the observations are reconstructed via representation learning. Therefore, in order to combine NODA with Dreamer, we just need to replace the deterministic path(a GRU cell) of the recurrent state space model(RSSM) in Dreamer with an ODE network, and we get NODA-Dreamer. The input compact state to the original deterministic state model in Dreamer, which is believed to be a canonical state, evolves by going through the ODE network. We also similarly implement NODA-BIRD by replacing the deterministic path of the RSSM in BIRD~\citep{zhu2020bridging} with an ODE network, since BIRD and Dreamer have the same RSSM. All other algorithm details remain the same as in Dreamer and in BIRD respectively.


\begin{figure}[ht]
\centering
\includegraphics[width=0.5\linewidth]{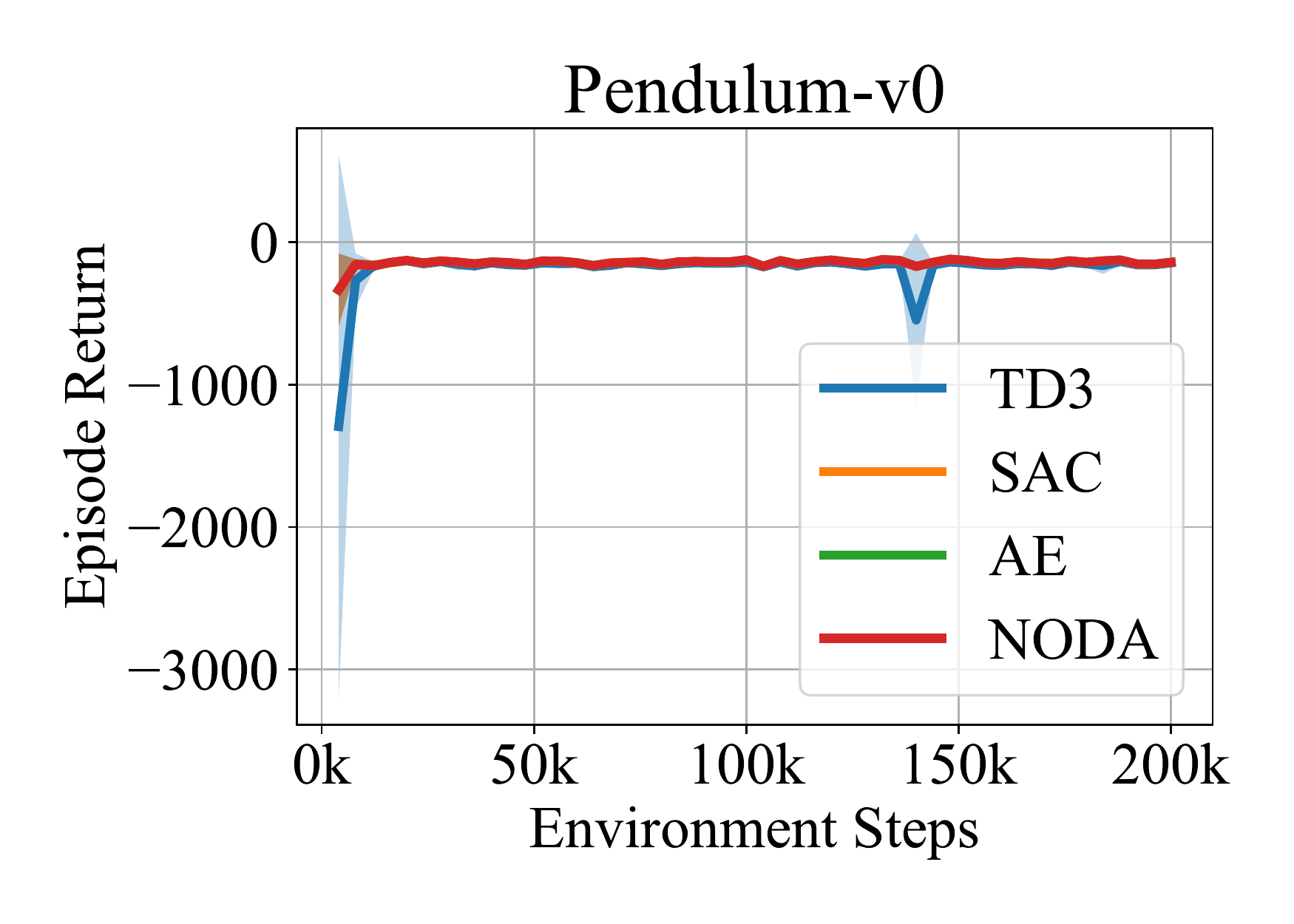}
\caption{Results of TD3, SAC, AE and NODA on the Pendulum-v0 environment in Gym. The models are evaluated every 4k steps for 10 episodes, and the means and standard deviations are computed across 10 episodes across 4 seeds. The performance of NODA-SAC is stable, and NODA is able to facilitate the training of SAC in an early stage.}
\label{fig: pendulum-v0}
\end{figure}

\begin{algorithm}[tb]
  \caption{NODA-SAC for Reinforcement Learning}
  \label{alg: NODA-SAC}
\begin{algorithmic}
  \STATE {\bfseries Input:} SAC actor $\pi$, SAC critics $q_1$, $q_2$, environment $env$, NODA model $m$ (with optimizer $opt_m$), warmup steps $N_1$, model generate intervals $N_2$, model planning steps $N_3$, interaction buffer $N_4$, batch size $B_1$, model batch size $B_2$, SAC training function $T_1$, NODA training function $T_2$, data generation function for NODA $G$
  \STATE $done\leftarrow True$
  \STATE $step\leftarrow 0$
  \STATE $D\leftarrow [~]$
  \STATE $D_m\leftarrow [~]$
  \REPEAT
  \IF{$done$}
  	\STATE $s\leftarrow env.reset()$
  \ENDIF
  
  \IF{$step \leq N_1$}
  	\STATE $a\leftarrow env.sample(B_1)$
  	\STATE $s_2, r, done\leftarrow env.step(a)$
  	\STATE $D.append(\{s, a, s_2, r, done\})$
  \ELSE
  	\STATE $a\leftarrow \pi(s)$
  	\STATE $s_2, r, done\leftarrow env.step(a)$
  	\STATE $D.append(\{s, a, s_2, r, done\})$
  	\STATE $batch\leftarrow D.sample(B_1)$
  	\STATE $\pi, q_1, q_2\leftarrow T_1(\pi, q_1, q_2, batch)$
  	\STATE $m\leftarrow T_2(m, batch, opt_m)$
  	\IF{$step \% N_2 == 0$}
        \STATE $D_m\leftarrow G(\pi, q_1, q_2, m, D, D_m, B_2, N_3)$
        \STATE $batch\leftarrow D.sample(B_1)$
        \STATE $\pi, q_1, q_2\leftarrow T_1(\pi, q_1, q_2, batch)$
    \ENDIF
  \ENDIF

  \STATE $step\leftarrow step + 1$
  \UNTIL{$step = N_4$}
  \RETURN SAC actor $\pi$, SAC critics $q_1$, $q_2$, NODA model $m$
\end{algorithmic}
\end{algorithm}

\begin{algorithm}[tb]
  \caption{Interaction function $G$ in NODA-SAC}
  \label{alg: NODA-SAC Interaction}
\begin{algorithmic}
  \STATE {\bfseries Input:} SAC actor $\pi$, SAC critics $q_1$, $q_2$, NODA model $m$, environment buffer $D$, model buffer $D_m$, batch size $B$, model planning steps $N$, learning rate $lr$ (default=1e-5)
  \STATE $batch\leftarrow D.sample(B)$
  \STATE $s, a\leftarrow batch.s, batch.a$
  \STATE $planning\_ step\leftarrow 0$
  \REPEAT
	  \STATE $s_2, r, \text{done} \leftarrow m.\text{step}(s, a)$
	  \STATE $D_m.append(\{s, a, s_2, r, \text{done}\})$ 
	  \STATE $s\leftarrow s_2$
	  \STATE $a\leftarrow \pi(s)$
	  \STATE $planning\_ step\leftarrow planning\_ step + 1$
  \UNTIL{$planning\_ step = N$}
  \RETURN model buffer $D_m$
\end{algorithmic}
\end{algorithm}

\begin{algorithm}[tb]
  \caption{Training function $T_2$ for NODA}
  \label{alg: NODA-SAC Training}
\begin{algorithmic}
  \STATE {\bfseries Input:} NODA model $m$, a batch of buffer $batch$, optimizer $opt$
 
  \STATE $s, a, s_2, r\leftarrow batch.s, batch.a, batch.s_2, batch.r$
  \STATE $loss_{srecon}\leftarrow (||m.decoder(m.encoder(s)) - s||_2^2).mean()$
  \STATE $m.set\_state(s)$
  \STATE $s_2', r' \leftarrow m.step(a)$
  \STATE $loss_{spred}\leftarrow (||s_2' - s||_2^2).mean()$
  \STATE $loss_s\leftarrow loss_{srecon} + loss_{spred}$
  \STATE $loss_r\leftarrow (||r'- r||_2^2).mean()$
  \STATE $loss \leftarrow 0.5\cdot loss_s + 0.5\cdot loss_r$
  \STATE $opt.zero\_grad()$
  \STATE $loss.backward()$
  \STATE $opt.step()$
  \RETURN NODA model $m$
\end{algorithmic}
\end{algorithm}

\section{Experimental Details}
In the pixel pendulum task, we run all the models for 3,000 batches with a batch size of 200. we decode the output of $h$ as the velocity prediction and set the next state to the evolution of the current state after a very short time (assuming the velocity does not change) as a weak supervision. More details can be found in our code. In the real pendulum task, we run all the models for 2,000 batches with a batch size of 200. The learning rate for both tasks is 1e-3.

When comparing different NODA models in the Ant-v3 task, we generate 20,000 steps for training and 20,000 steps for testing by a random policy. Each NODA model is trained by 3,000 batches. The batch size is 256, and the learning rate is still 1e-3. 

For TD3~\citep{fujimoto2018addressing}, SAC~\citep{haarnoja2018soft}, AE-SAC and NODA-SAC, we use the code provided by spinning up~\citep{SpinningUp2018} (the PyTorch~\citep{NEURIPS2019_9015} version). We modify it to use GPU, and use batch size=100 for TD3 and SAC (original setting), and batch size=256 for AE-SAC and NODA-SAC. Other parameters are the same as what are used in spinning up. We implement NODA by PyTorch, and we use torchdiffeq\footnote{\url{https://github.com/rtqichen/torchdiffeq}} as the implementation of the ODE network~\citep{chen2018neural, chen2021eventfn}.

In addition to MuJoCo~\citep{todorov2012mujoco, schulman2015high} environments, we also compare these algorithms over a simple physical environment, Pendulum-v0 in Gym~\citep{brockman2016openai}, and the result is shown in Figure~\ref{fig: pendulum-v0}. It shows that the return of NODA-SAC converges quickly, and is quite stable.

For NODA-Dreamer, all the experimental details remain the same as those in Dreamer, except that we run only 200,000 steps for every task we try, and we evaluate the performance of the model every 5,000 steps. What's more, we use tfdiffeq\footnote{\url{https://github.com/titu1994/tfdiffeq}} as the implementation of the Neural ODE, which runs entirely on Tensorflow Eager Execution. We evaluate Dreamer, BIRD and NODA-Dreamer on 6 tasks in total in Figure~\ref{fig: noda_dreamer_results}. We also evaluate NODA-BIRD, which is mentioned in the previous section, on 3 tasks, as shown in Figure~\ref{fig: noda_dreamer_bird_results}.

\begin{figure}[ht]
\centering
\includegraphics[width=\linewidth]{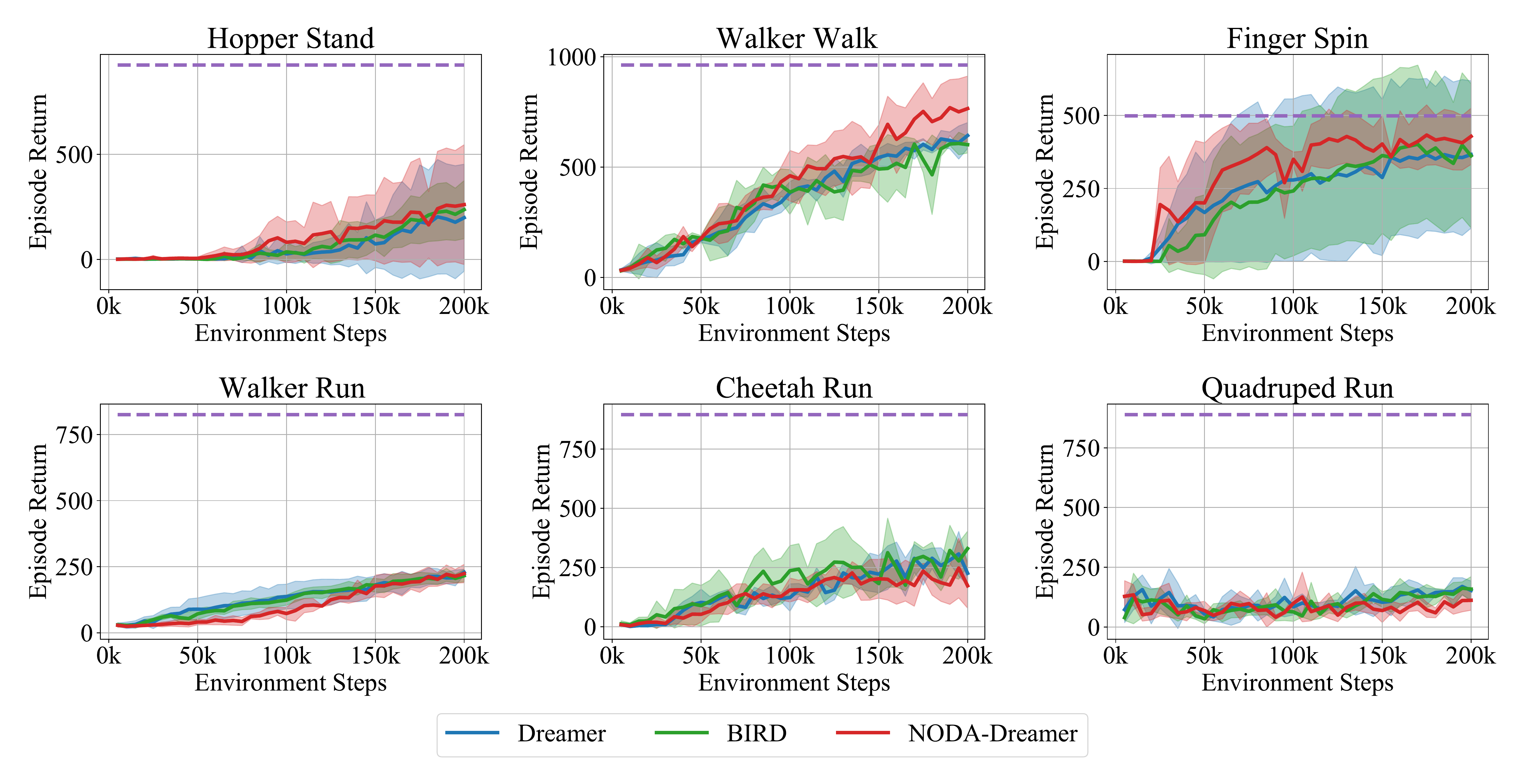}
\caption{Results of Dreamer, BIRD and NODA-Dreamer on 6 tasks in DeepMind Control Suite. The models are evaluated every 5k steps for 10 episodes, and the means and standard deviations are computed across 10 episodes across 4 seeds. The dashed lines are the asymptotic performances of 5M steps of Dreamer mentioned in ~\citep{hafner2019dream}. In many tasks, NODA-Dreamer has higher episode return starting early in the training process, which shows that NODA can assist in improving sample efficiency. In "Walker Run", during the first half of training, NODA-Dreamer lags far behind Dreamer and BIRD, but it quickly catches up in the second half. This suggests that NODA has the power to accelerate training after some warm-up steps. In "Quadruped Run", 200,000 steps are not sufficient for any algorithm to learn meaningful information.}
\label{fig: noda_dreamer_results}
\end{figure}

\begin{figure}[ht]
\centering
\includegraphics[width=\linewidth]{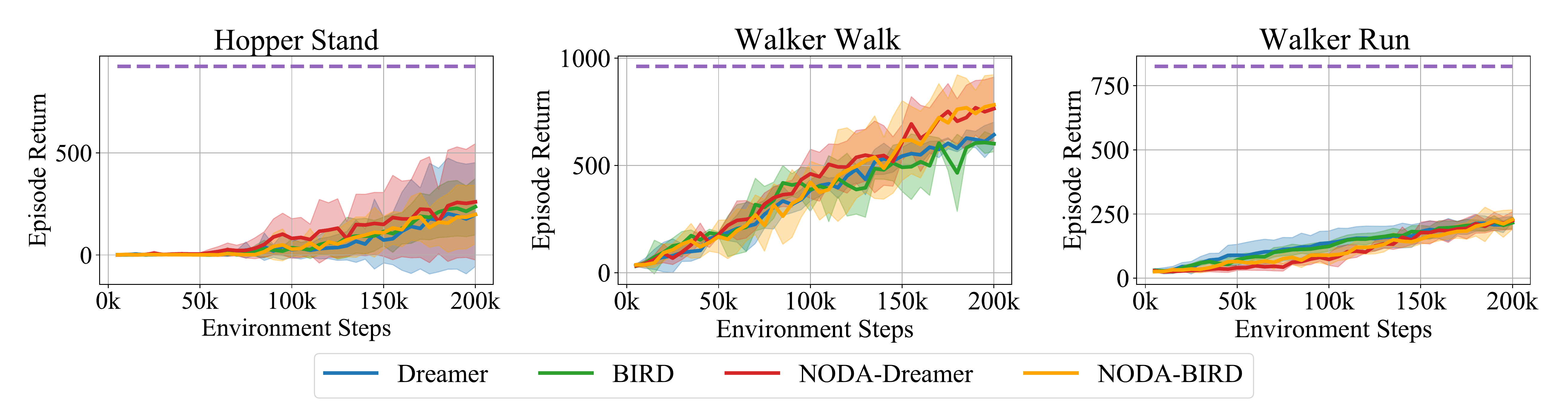}
\caption{Results of Dreamer, BIRD, NODA-Dreamer and NODA-BIRD on 3 tasks in DeepMind Control Suite. The models are evaluated every 5k steps for 10 episodes, and the means and standard deviations are computed across 10 episodes across 4 seeds. The dashed lines are the asymptotic performances of 5M steps of Dreamer mentioned in ~\citep{hafner2019dream}. The results suggest that NODA is very promising, in the sense that it can be easily combined with different MBRL methods, especially auto-encoder-type methods, by simply modifying their transition model.}
\label{fig: noda_dreamer_bird_results}
\end{figure}

\end{document}